\def\ent { \mathbb{N} }
\def\reel { \mathbb{R} }
\def\rat { \mathbb{Q} }
\newtheorem{example}{Example}
\newtheorem{definition}{Definition}
\newtheorem{theorem}{Theorem}
\newtheorem{remark}{Remark}
\newtheorem{corollary}{Corollary}
\newenvironment{proof}{\noindent\emph{Proof.}}{\hfill $\Box$\break\par}
\newcommand{\refe}[1] {(\ref{#1})}
\def \critun {Commute\, time}
\def \critdeux {Sport\, club} 
\def \crittrois {Size}
\def \critquatre {Cost}
\def \AA {-15}
\def \BB {gym}
\def \CC {400}
\def \DD {-5000}
\def \aa {-50}
\def \bb {no\, gym}
\def \cc{200}
\def \dd {-12000}
\def \A {A}
\def \B {B}
\def \C {C}
\def \D {D}
\def \a {a}
\def \b {b}
\def \c {c}
\def \d {d}
\begin{document}

\title{Explaining robust additive utility models by sequences of preference swaps
}


\author{K. Belahcene%
\thanks{Email: \texttt{Khaled.Belahcene@polytechnique.org}}}
\affil{LGI, Ecole Centrale de Paris, Chatenay Malabry, France}

\author{C. Labreuche%
\thanks{Email: \texttt{christophe.labreuche@thalesgroup.com}}}
\affil{Thales Research \& Technology, 91767 Palaiseau Cedex, France}

\author{N.  Maudet%
\thanks{Email: \texttt{nicolas.maudet@lip6.fr}}}
\affil{LIP6, Universit\'e Paris-6, 75252 Paris Cedex 05, France}

\author{V. Mousseau%
\thanks{Email: \texttt{vincent.mousseau@ecp.fr}}}
\affil{LGI, Ecole Centrale de Paris, Chatenay Malabry, France}

\author{W. Ouerdane%
\thanks{Email: \texttt{wassila.ouerdane@ecp.fr}}}
\affil{LGI, Ecole Centrale de Paris, Chatenay Malabry, France}


%

\date{08/02/2015}

\maketitle

\begin{abstract}

Multicriteria decision analysis aims at supporting a person facing a
decision problem involving conflicting criteria. We consider an 
additive utility model which provides
robust conclusions based on preferences elicited from the
decision maker. The recommendations based on these robust
conclusions are even more convincing if they are complemented by
explanations. We propose a general scheme, based on sequence
of preference swaps, in which explanations can be computed. We show first that the length of
explanations can be unbounded in the general case.   However, in the case of binary
reference scales, this  length is bounded and we provide an
algorithm to compute the corresponding explanation.

\end{abstract}

\section{Introduction}

\label{sec_intro}

A multi-criteria problem consists in formalizing the problem and eliciting the preferences of the decision maker (DM). 
In many decision contexts  providing recommendations based only on 
the elicited  preference model is insufficient. In fact, decision makers may want explanations which justify in a convincing way such recommendations. Indeed,
justifying and explaining a rationale for a decision is almost as important as
the recommendation itself. This is particularly true in situations where the
decision needs to be justified to some other stakeholders (who did not participate to the decision process). In this case, it is not satisfactory to just
present the preference model and the resulting recommendation. Although
technically this model does contain all the information on which such a recommendation is based, the format is unlikely to be suitable for presentation. Hence, the need for a synthetic and short explanation. Building a convincing
explanation is also required when the DM cannot be assumed to have any
mathematical background, as in the case of online recommender systems, where it has been shown that explanations improve the acceptability of the
recommended choice \cite{Pu2007}.

The problem of constructing or providing convincing explanations in order to justify recommended decisions has a rather long history in Artificial
Intelligence (see for instance \cite{caremoor06,klein94,SymeonidisRecSys09,Sullivan2007}). In a nutshell, the idea is to provide supporting evidence that a recommendation is
justified. This evidence may emphasize some critical data used for the recommendation, and/or provide a simplified version of the process which lead to
the recommendation \cite{HerlockerKR2000,FriedrichZankerAIMag2011}. However, the problem is especially difficult
in the context of multiple criteria models \cite{LabreucheAIJ2011,labmaudouer12}, where different
criteria are at stake.

In this paper we shall concentrate on the simple additive utility model. This well-known model assumes independence among criteria, although of course different criteria may have different weights. In other words, no synergy (either positive or negative) occurs between the different criteria.  An example of an elicitation method that relies on such a model, is the  \emph{even swaps} method \cite{hamkeerai98}. It aims at identifying, between two options $x$ and $y$, which one is preferred to another one without explicitly  constructing the utility functions. This is basically an elimination process based on trade-offs between \emph{pairs} of attributes (hence the name \emph{even swaps}). 
Broadly speaking, in such a swap, the DM changes the consequence (or score) of an alternative on one attribute, and compensates this
change with one another attribute, so that the new alternative is equally preferred in the end. 
By replacing one option (say $x$) with a different but equally preferred one, the hope is that dominance will occur over $y$. The process is thus repeated until dominance can be shown to hold, allowing to progressively eliminate attributes.

The \emph{UTA (Additive UTility)} method is another way to elicit the preference of the decision maker and  to construct a recommendation \cite{JacquetLagrezeSiskos82}. The DM is supposed to provide a set of comparison of alternatives. The advantage of this approach over the even swaps method is that the required preferential information is simpler from the DM point of view. Trade-offs are indeed very complex to provide. In fact, as it was pointed in \cite{hamkeerai98}: ``Making wise trade-offs is one of the most important and difficult challenges in decision making''.
On the other hand, the main drawback of the UTA method is that the preferential information brought by the DM is not sufficient to uniquely specify the utility functions (utility functions are only partially known), and UTA does not account for the multiplicity of the compatible utility. More precisely, this fact is dealt by choosing an arbitrary completion. 
As a matter of fact,
a conservative approach consists in relying on a robust (or necessary) preference relation \cite{grmasl08,grslfimo09}. In words, the relation holds if \emph{any} possible completion of the available preferential information yields the preferential statement.  

The aim of the paper is to explain a robust preference relation.
When utilities are known, one can interpret them  in terms of the importance of criteria and the satisfaction level of criteria \cite{klein94,caremoor06,LabreucheAIJ2011}.
This approach is not possible with the robust preference relation as the robust preference of an option $x$ over another one $y$ is complex.
The idea is to decompose a robust preference into several simpler recommendations, as it is the case in the even swaps approach.

However, the even swaps approach suffers from a limitation: by requiring each new generated option to be equally preferred to the initial one, this makes the technique poorly adapted to the context of incomplete preferences where such  an equivalence virtually never hold. 
The generalization of even swaps to robust relations is thus called \emph{preference swaps}. 
A very interesting property of even-swaps (for explanations) to justify a recommendation is that it does not require explicitly  the value of utility function or the model used to get the solution. 

Therefore, we construct a sequence of alternatives in which the first option is $x$ and the last one is $y$, and there is a robust preference between any two successive elements in the sequence. 
The existence of such a sequence entails the necessary preference of $x$ over $y$, by transitivity of the robust preference relation. 
The options in the sequence are constructed in such a way that each comparison in the sequence is simple to understand for the DM.
This is in particular the case if two successive alternatives in the sequence are similar on most of criteria and differ only on a few criteria.
This sequence of alternatives provides an explanation of the robust preference of $x$ over $y$.

The remainder of the paper is as follows.  Section \ref{sec_model}  describes the background notions and concepts that we  shall use in formulating the explanation.  Section \ref{sec_charac} presents new results concerning the necessary preference relation. These results are used to derive properties of the explanation engine introduced in this paper.  This engine is described and discussed in Section \ref{sec_explaining}, in terms of some technical characteristics that we believe important to deal with the question of constructing a formal explanation.  Finally,  in Section \ref{sec_binary} we address the case of binary preference scales, and show the properties of explanations when preferences are expressed in such scales.

\section{Background Notions and Literature review}

\label{sec_model}

We consider a finite set $N=\{1,\ldots, n\}$ of criteria. Each
criterion $i\in N$ is described by an attribute $X_i$. We assume
that all attributes are numerical and increasing. For discrete attributes, $X_i$
represents integers. For continuous attributes, $X_i$ is an
interval (possibly unbounded). Alternatives are considered as
elements of the Cartesian product of the attributes: $X= X_1\times
\cdots\times X_n$.

In this section we recall the principles of the standard additive utility model, see section \ref{sec_model1}. 
Then section \ref{sec_model2} describes shortly the even swaps approach.
Lastly section \ref{sec_model3} explains the robust preference relation.  Moreover, in order to get an understanding of the notions and different proposals of this paper we shall use the Example \ref{ex1}.  The idea is to provide, at the end,  an explanation for why an option is the best choice for the case considered.

\begin{example}
\label{ex1}

You need to rent an office for
your business  and your are undecided between two options, namely: $x$ and $y$.  
Such options are evaluated according to four criteria: \{ 1: the \critun (min); 2: 
the availability of a \critdeux ($gym, no\, gym$); 3:  the  \crittrois (m$^{\small 2}$),  and 4: the \critquatre
(\euro{}) \}.  Of course you want to minimize the cost and commute time, while you seek to maximize the availability of a sport club and the size.  \\

The criteria are described respectively by the following  increasing attributes: $X_1\subset \mathbb{R}^-, X_2=\{Yes, No\}, X_3 \subset \mathbb{R}^+,  and \, X_4 \subset \mathbb{R}^-$.   Thus,  the evaluation of the  previous  options is as follows:\\ 

$x=(-45, no\, gym, 450, -5000)$  and  $y=(-15, gym, 180, -12500)$\\
\end{example}

\subsection{Utility based preference model}
\label{sec_model1}

According to the additive utility model the comparison of two  multi-attribute
alternatives $x = (x_1,x_2,\cdots, x_n)$ and $y = (y_1,y_2,\cdots, y_n)$ is given by:
\begin{eqnarray*}
   \lefteqn{}x \succsim y & \Leftrightarrow &  \sum_{i\in N} u_i(x_i)
\ge \sum_{i\in N} u_i(y_i) 
\end{eqnarray*}
where the utility functions $u_i:X_i\rightarrow \reel$ are supposed to be known (and already elicited). This model entails separability, expressed by the ceteris paribus principle: when comparing two options, the criteria where the options have equal attributes do not count and can be merely ignored. In this paper,  the wildcard ``$\star$'' will denote the common evaluation when comparing two alternatives. For instance $(\AA, \bb, \cc, \star) \succsim (\aa, \BB, \cc, \star)$ means that the preference holds whatever the common evaluation on the fourth attribute.

A popular method to construct an additive utility model is the UTA (UTility Additive) approach \cite{JacquetLagrezeSiskos82}.
In this method, a subset $V_i$ of $X_i$ is chosen for each attribute.
The elements of $V_i$  are denoted by $V_i = \{v_{i,1} < v_{i,2} <
\ldots < v_{i,p_i}\} \subset X_i$, where $p_i = |V_i |$.  For instance, as it is described in the Example \ref{ex2}, for the first attribute, \critun, $V_1= \{\a:=\aa < \A:=\AA\}$. 

We set $V=V_1 \times \ldots \times V_n$. 
The unknowns of the decision model are the values of utility functions at the points in $V_1\times\cdots\times V_n$,
i.e. the values $(u_i(v_{i,1}),\ldots,u_i(v_{i,p_i}))_{i\in N}$.

Given two alternatives $x,y \in X$, the \emph{query} ``is $x$
preferred to $y$ ?" is denoted $(x \succeq_? y)$, and will be
given a truth value by determining the pair $(x,y)$'s membership
of various preference relations over $X$, which are partial
preorders : reflexive and transitive binary relations, but not
necessarily antisymmetric, and generally not complete\footnote{A related question is to count the number of queries to come up with a complete linear order \cite{FishburnPR02}}. 
For the moment, we consider two preference relations:

\begin{itemize}
\item $ ~\mathcal{D}$:  the
Pareto dominance relation (in Example \ref{ex2}: $\mathcal{D}= \{ (e_2 \succsim e_3)\}$ );

\item $\mathcal{P}$: the preferential
information explicitly given by the decision maker, leveraged as a
learning set (in Example \ref{ex2}:  $\mathcal{P}= \{ (e_1 \succsim e_2); (e_3 \succsim e_4), (e_4 \succsim e_5)\}$). 
In the UTA method, the preferential information $\mathcal{P}$ is composed of preference statements of the form $x \succsim y$, meaning that $x$ is at least as good as $y$ regarding the concerns of the DM.
\end{itemize}

\begin{example}{(Ex \ref{ex1}. Ctd.)}

Our aim in this paper is to build explanations, considering the case of binary reference scales (see Section \ref{sec_binary}). To stay within this framework and to demonstrate the feasibility of our proposal, we adapt the Example \ref{ex1} to this context. Therefore, the preference information is collected using alternatives on binary reference  scales.  Thus,  according to the Table \ref{matrix}, the preferential information $\mathcal{P}= \{ (e_1 \succsim e_2); (e_3 \succsim e_4), (e_4 \succsim e_5)\}$.

\begin{table}[htp!]
\begin{center}
\begin{tabular}{|c|c|c|c|c|}
\hline
    \emph{Criterion: $i$}  & \emph{\critun} & \emph{\critdeux} & \emph{\crittrois} & \emph{\critquatre}  \\ \hline
\emph{Top level:} $v_{i,2}$& \A :=$\AA$ &\B := $\BB$ & \C := $\CC$& \D:= $\DD$ \\ \hline
\emph{Bottom level}: $v_{i,1}$&\a :=  $\aa$&  \b :=$\bb$&  \c :=$\cc$&  \d := $\dd$\\ \hline
\end{tabular}
\end{center}
\caption{Criteria scales}
\label{crit}
\end{table}

\begin{table}[htp!]
\begin{center}
\begin{tabular}{|c|c|c|c|c|}
\hline
 & \emph{\critun} & \emph{\critdeux} & \emph{\crittrois} & \emph{\critquatre} \\ \hline
$e_1$& $\AA$ & $\bb$ & $\CC$& $\dd$\\ \hline
$e_2$ & $\aa$& $\BB$& \cc& $\DD$ \\ \hline
$e_3$ & $\aa$ & $\bb$ & $\cc$ & $\DD$\\ \hline
$e_4$ & $\aa$ &  $\BB$& $\CC$&  $\dd$\\ \hline
$e_5$ & $\AA$ & $\bb$&  $\cc$& $\dd$\\ \hline
\end{tabular}
\end{center}
\caption{Evaluation of the learning set}
\label{matrix}
\end{table}
\label{ex2}
\end{example}


\subsection{The even swaps method}
\label{sec_model2}

The even swaps method \cite{hamkeerai98}
relies on an additive utility function.  
In order to choose the best alternative,
this method does not require to fully elicit the marginal utility 
functions, but only a limited number of trade-offs between 
pairs of  attributes (swaps). In other words, the DM does
not have to explicitly define the preferences over the attributes in
general or to make any assumption about the form of the utility 
function. 
More precisely, the decision maker changes the consequence (or score) of an
alternative on one attribute, and compensates this change with a
preferentially equal change on another attribute. This creates a new
fictitious alternative, that is indifferent to the previous one,
with revised consequences. We use this alternative to try to
eliminate the other ones. The aim of this process is to carry out
even swaps that make either alternatives dominated or attributes
irrelevant.

Intuitively,  this method  can be seen as a scattered exploration of the isopreference curve (the curve where lies, even virtually, the alternatives equally preferred) of the DM. 
 This constructive method is quite intuitive as only two
attributes are involved in even swaps, and utilities are never explicitly mentioned to the DM.  Moreover, the existence of indifference statements required by the use of the even swaps method is guaranteed by the solvability condition used in conjoint measurement \cite[chapter 6]{krlusutv71}. 

We can also note that \cite{Mustajoki2009,Mustajoki2005} propose to enrich the original even swaps method in a way that accounts for incomplete knowledge about the value function. They consider a ``practical dominance'' notion when the value  of  an alternative is at least as high as the value of another one with every feasible combination of parameters, this perspective being  very close to the one developed in \cite{grmasl08} (see next section). However, this notion is only used for pre-processing dominated alternatives, and not integrated in the swap process, let alone used for explanatory purposes.

%
%

\subsection{Robust relation with the additive utility Model}

\label{sec_model3}

As already mentioned, the main drawback of the UTA method is that the preferential information brought by the  decision maker is not sufficient to uniquely specify the utility functions. 
A conservative approach consists in relying on a robust (or necessary) preference relation \cite{grmasl08,grslfimo09}. 
In words, the relation holds if \emph{any} possible completion of the available preferential information yields the preferential statement.

\begin{definition}[Necessary Preference Relation \cite{grmasl08,grslfimo09}]
 \label{defNPR}
 {\ } \newline

$x\succsim y$ iff $ \displaystyle \sum_{i\in N} u_i(x_i) \geq \sum_{i\in N}
u_i(y_i)$ for all $u_i$ s.t.
\begin{itemize}


\item $u_i$ is a monotonically increasing function $X_i
\rightarrow \reel$,

\item $\displaystyle \sum_{i\in N} u_i(a_i) \geq \sum_{i\in N}
u_i(b_i) \qquad \forall [a \succsim b]\in \mathcal{P}$.
\item $\displaystyle u_i(x_i^{min}), \forall i \in N, \sum_{i \in N} u_i(x_i^{max}) = 1$ where $x_i^{min} (x_i^{max}$ resp.) represent the minimum (maximum, respectively) envaluation in $X_i$. 

\end{itemize}
\end{definition}

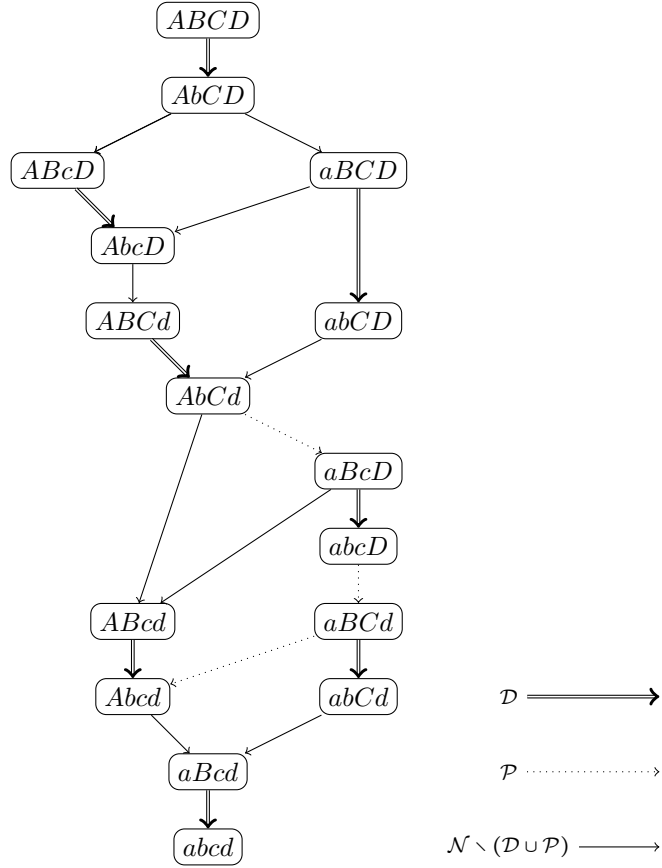
\begin{figure}[ht!]
\begin{center}

\begin{tikzpicture}
\path (0,12) node(ABCD) [rectangle, rounded corners, draw] {$\A\B\C\D$}
(0,11) node(AbCD) [rectangle, rounded corners, draw] {$\A\b\C\D$}
(-2,10) node(ABcD) [rectangle, rounded corners, draw] {$\A\B\c\D$}
(2,10) node(aBCD) [rectangle, rounded corners, draw] {$\a\B\C\D$}
(-1,9) node(AbcD) [rectangle, rounded corners, draw] {$\A\b\c\D$}
(-1,8) node(ABCd) [rectangle, rounded corners, draw] {$\A\B\C\d$}
(2,8) node(abCD) [rectangle, rounded corners, draw] {$\a\b\C\D$}
(0,7) node(AbCd) [rectangle, rounded corners, draw] {$\A\b\C\d$}
(2,6) node(aBcD) [rectangle, rounded corners, draw] {$\a\B\c\D$}
(2,5) node(abcD) [rectangle, rounded corners, draw] {$\a\b\c\D$}
(2,4) node(aBCd) [rectangle, rounded corners, draw] {$\a\B\C\d$}
(2,3) node(abCd) [rectangle, rounded corners, draw] {$\a\b\C\d$}
(0,2) node(aBcd) [rectangle, rounded corners, draw] {$\a\B\c\d$}
(0,1) node(abcd) [rectangle, rounded corners, draw] {$\a\b\c\d$}
(-1,4) node(ABcd) [rectangle, rounded corners, draw] {$\A\B\c\d$}
(-1,3) node(Abcd) [rectangle, rounded corners, draw] {$\A\b\c\d$}
(4,3) node(legendD) {\footnotesize{$\mathcal{D}$}}
(4,1) node(legendN) {\footnotesize{$\mathcal{N \setminus (\mathcal{D} \cup \mathcal{P})}$}}
(4,2) node(legendP) {\footnotesize{$\mathcal{P}$}}
; 

\tikzstyle{pareto}=[->, double]
\tikzstyle{domi}=[->]
\tikzstyle{pi}=[->, dotted]

\draw (ABCD) [pareto] -- (AbCD); 
\draw (ABcD) [pareto] -- (AbcD); 
\draw (aBCD) [pareto] -- (abCD); 
\draw (ABCd) [pareto] -- (AbCd); 
\draw (aBcD) [pareto] -- (abcD); 
\draw (aBCd) [pareto] -- (abCd); 
\draw (ABcd) [pareto] -- (Abcd); 
\draw (aBcd) [pareto] -- (abcd);

\draw (aBCD) [domi] -- (AbcD); 
\draw (AbCD) [domi] -- (ABcD); 
\draw (AbCD) [domi] -- (aBCD); 
\draw (AbcD) [domi] -- (ABCd); 
\draw (abCD) [domi] -- (AbCd); 
\draw (AbCd) [domi] -- (ABcd); 
\draw (AbCD) [domi] -- (ABcD); 
\draw (Abcd) [domi] -- (aBcd);
\draw (abCd) [domi] -- (aBcd); 
\draw (aBcD) [domi] -- (ABcd);

\draw (AbCd) [pi] -- (aBcD); 
\draw (abcD) [pi] -- (aBCd); 
\draw (aBCd) [pi] -- (Abcd); 

\draw (legendP) [pi] -- (6,2); 
\draw (legendD) [pareto] -- (6,3); 
\draw (legendN) [domi] -- (6,1);

\end{tikzpicture}

\end{center}
\caption{Necessary Preference relations}
\label{fig_necessary}
\end{figure}

Note that, once the DM has provided his preferential information $\mathcal{P}$, the reference scales $V_i$ are simply taken as the values that the alternatives in the preferential information use:
\begin{equation}
 V_i =\{ \{ a_i,b_i\} \: , \: (a \succsim b) \in \mathcal{P} \} .
\label{EqVi}
\end{equation} 

According to Definition \ref{defNPR}, we introduce a new relation: $\mathcal{N}$, the necessary preference
relation that can be deduced from $\mathcal{P}$ under the
assumption of an additive utility model (see the following section
for more details). $\mathcal{N}$ contains
$\mathcal{P}$,$\mathcal{D}$ and other pairs $(x,y)$ such that $x$
\emph{is necessarily preferred to} $y$, which can be expressed by
the \emph{statement} $(x \succsim y)$. For example, the graph of the Figure  \ref{fig_necessary} represents the necessary relation $\mathcal{N}$ build according to the Example \ref{ex2}. 

\section{Characterization of the robust value based preference relation}

\label{sec_charac}

Our aim through this paper is to provide a solid mechanism to construct explanations for necessary preference relations. Before a new explanation framework is introduced and discussed in Section \ref{sec_explaining}, this section presents some new algorithmic results concerning the necessary preference relation. Such results are important to establish some properties of the explanation engine detailed in Section \ref{sec_binary}.

\subsection{Rounding queries to the reference scales}

Definition \ref{defNPR} enables to express queries where
alternatives are taken in the whole evaluation space $X=X_1 \times
X_2 \times \ldots \times X_n$. However, on each criterion $i\in
N$, preferential information $\mathcal{P}$ is expressed on a
subset of $X_i$, the \emph{reference scale} $V_i$ (see \refe{EqVi}).

When $x_i, y_i \in V_i$ for all $i\in N$ the pessimistic
evaluation for all monotonically increasing functions can be
written as a minimization problem, where the objective function
and the constraints are linear functions of the $\displaystyle
\sum_{i\in N}p_i$ variables $u_i({v}_{i,k}), i\in N, 1 \leq k
\leq p_i$ \cite{grmasl08,grslfimo09}:
$$ (x \succsim_? y)\in \mathcal{N}$$
$$\Updownarrow$$
\begin{equation}\min \sum_{i\in N} u_i(x_i) - u_i(y_i) \geq 0
\label{NecAsMin}\end{equation}
$$
\textrm{s.t.}\left\{\begin{array}{lll}

 u_i(v_{i,k})-u_i(v_{i,k'}) & \geq 0 & \forall i\in N,\;\forall k,k' \;,\; 1 \leq k' < k \leq p_i\\
 \displaystyle \sum_{i\in N} u_i(a_i)-u_i(b_i)  & \geq
  0 &  \forall (a\succsim b) \in \mathcal{P}\\

\end{array}\right.$$

In order to account for values $x_i, y_i$ outside $V_i$, previous
studies \cite{grmasl08,grslfimo09} augment the reference scales $\widehat{V_i} := V_i \cup
\{x_i,y_i\}$, thus allowing the resolution space to evolve
dynamically according to the input alternative. We propose a
characterization of the necessary relation as linear programs with
variables and constraints defined statically by the preferential
information $\mathcal{P}$. Alternatives evaluated outside the
\emph{reference scales}  are preprocessed and fitted onto $V$
using a conservative rule, directly accounting for the worst case
under monotonicity constraints. This fitting proceeds on each
criterion $i\in N$ separately, as permitted by the additive
utility model. On criterion $i$, the values of $\widehat{V_i}$ are
sorted in ascending order, inserting $x_i$ and $y_i$ into the
scale $v_{i,1} < v_{i,2} < \ldots < v_{i,p_i}$. Three exclusive cases arise, as it is illustrated in the Figure \ref{fig_1} :
\begin{itemize}
\item If $y_i$ is strictly at the top of the scale, and/or $x_i$
is strictly at the bottom of the scale (i.e. $x_i < y_i$ and [
$y_i
> v_{i,p_i}$ or $x_i < v_{i,1}$]), then 
necessary preference of $x$ over $y$ is  impossible, and we say
the query $(x \succsim_? y)$ is \emph{unbounded} by $\mathcal{P}$ (see case (a) of Figure \ref{fig_1}).

\item If the pair $y_i \leq x_i$ is adjacent (i.e. $y_i \leq x_i$
and $\nexists k \; y_i \leq v_{i,k} \leq x_i$), then criterion $i$ is
neutral in the eventual necessary preference of $x$ over $y$. 
The specific value of attributes $x_i,y_i$ does not matter, and can
be 
replaced by the wildcard $\star$ (see criterion $k$ in case (b) of Figure \ref{fig_1}).

\item Otherwise, there are indexes $k, k'$ such that $v_{i,k} \leq
x_i$ and $ y_i \leq v_{i,k'}$ are adjacent pairs. We will show that the eventual
necessary preference of $x$ over $y$ can be examined by
considering the reference values $(v_{i,k},v_{i,k'})$ instead of
$(x_i, y_i)$. In this substitution, the candidate to preference
$x$ is rounded down to the next lower value in $V_i$, while the
challenger $y$ is rounded up (see criterion $i$ and $j$ in case (b) of Figure \ref{fig_1}).

These cases are formalized by the following Definitions \ref{defbounded} and \ref{defrounding}.

\end{itemize}


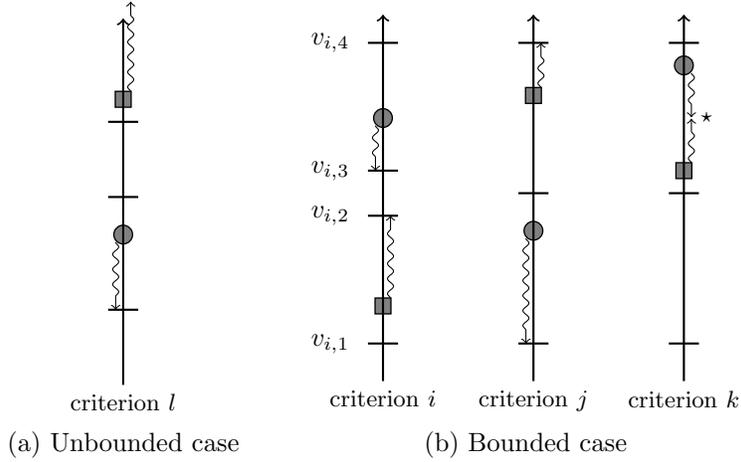
\begin{figure}[ht!]
\begin{center}
\begin{tabular}{ccc}
\begin{tikzpicture}

\tikzstyle{alt-O}=[circle,inner sep = 2.5pt,draw, fill=black!50, line width=0.5pt]
\tikzstyle{alt-X}=[rectangle, inner sep = 3pt, draw, fill=black!50, line width=0.5pt]

\path 
node at (12,3.8) [alt-X] {}
node at (12,2) [alt-O] {};
\path node (bottom3) at (12,0)  [below] {\small{criterion $l$}}
node (top3) at (12,5) {}; 
\draw (11.8,3.5) [thick] -- (12.2,3.5);
\draw (11.8,2.5) [thick] -- (12.2,2.5);
\draw (11.8,1) [thick] -- (12.2,1);
\draw [->,decorate,decoration={snake,amplitude=.4mm,segment length=2mm,post length=1mm}] (12.1,3.9) -- (12.1,5.1);
\draw [->,decorate,decoration={snake,amplitude=.4mm,segment length=2mm,post length=1mm}] (11.9,1.9) -- (11.9,1);
\draw [->,thick] (bottom3) -- (top3);

\end{tikzpicture}
 &  &

 \begin{tikzpicture}
\tikzstyle{alt-O}=[circle,inner sep = 2.5pt,draw, fill=black!50, line width=0.5pt]
\tikzstyle{alt-X}=[rectangle, inner sep = 3pt, draw, fill=black!50, line width=0.5pt]

\path 
node at  ( 2,2.8)  [alt-X] {}
node at ( 2,4.2)  [alt-O] {}
node at ( 2.3,3.5) {$\star$}
node at ( 0,3.8) [alt-X] {}
node at ( 0,2)  [alt-O] {}
node at ( -2,3.5)  [alt-O] {}
node at  (-2,1)  [alt-X] {}
node at (-2.7,0.5) {$v_{i,1}$}
node at (-2.7,2.2) {$v_{i,2}$}
node at (-2.7,2.8) {$v_{i,3}$}
node at (-2.7,4.5) {$v_{i,4}$};
\draw [->,decorate,decoration={snake,amplitude=.4mm,segment length=2mm,post length=1mm}] (-1.9,1.1) -- (-1.9,2.2); 
\draw [->,decorate,decoration={snake,amplitude=.4mm,segment length=2mm,post length=1mm}] (-2.1,3.4) -- (-2.1,2.8); 
\draw [->,decorate,decoration={snake,amplitude=.4mm,segment length=2mm,post length=1mm}] (-0.1,1.9) -- (-0.1,0.5); 
\draw [->,decorate,decoration={snake,amplitude=.4mm,segment length=2mm,post length=1mm}] (0.1,3.9) -- (0.1,4.5); 
\draw [->,decorate,decoration={snake,amplitude=.4mm,segment length=2mm,post length=1mm}] (2.1,4.1) -- (2.1,3.51); 
\draw [->,decorate,decoration={snake,amplitude=.4mm,segment length=2mm,post length=1mm}] (2.1,2.9) -- (2.1,3.49); 
\path node (bottom0) at (-2,0) [below] {\small{criterion $i$}}
node (top0) at (-2,5) {}; 
\path node (bottom1) at (0,0) [below] {\small{criterion $j$}}
node (top1) at (0,5) {}; 
\path node (bottom2) at (2,0) [below]  {\small{criterion $k$}}
node (top2) at (2,5) {}; 

\draw (-2.2,0.5) [thick] -- (-1.8,0.5);
\draw (-2.2,2.2) [thick] -- (-1.8,2.2);
\draw (-2.2,2.8) [thick] -- (-1.8,2.8);
\draw (-2.2,4.5) [thick] -- (-1.8,4.5);

\draw (-0.2,0.5) [thick] -- (0.2,0.5);
\draw (-0.2,2.5) [thick] -- (0.2,2.5);
\draw (-0.2,4.5) [thick] -- (0.2,4.5);

\draw (1.8,0.5) [thick] -- (2.2,0.5);
\draw (1.8,2.5) [thick] -- (2.2,2.5);
\draw (1.8,4.5) [thick] -- (2.2,4.5);

\draw [->,thick] (bottom0) -- (top0);
\draw [->,thick] (bottom1) -- (top1);
\draw [->,thick] (bottom2) -- (top2);
\end{tikzpicture} \\
(a) Unbounded case & & (b) Bounded case   \\
\end{tabular}

\caption{Rounding of a query $(x \succsim_? y)$, where $\bullet$ and $\filledsquare$ represent respectively the value of option $x$ and of option $y$.}
\label{fig_1}
\end{center}
\end{figure}

\begin{definition}[bounded and unbounded preference queries]
\label{defbounded} {\ }\newline Let $x,y\in X$, we say the
preference query $(x \succsim_? y)$ is \emph{unbounded by
$\mathcal{P}$} if $\exists i \in N, y_i > x_i$ and
($y_i>v_{i,p_i}$ or $x_i < v_{i,1})$. Otherwise, the query $(x
\succsim_? y)$ is \emph{bounded by $\mathcal{P}$}.
\end{definition}

\begin{definition}[Rounding of bounded preference queries]
\label{defrounding} {\ }\newline
Let $(x \succsim_? y)$ a bounded query. It is mapped to the
rounded query, $[x \succsim_? y] := (\underline{x} \succsim_?
\overline{y})$, where alternatives $(\underline{x},\overline{y})$
are taken in the reference scale $V$ and defined jointly as
follows :


$ \forall i\in N,(\underline{x}_i,\overline{y}_i)=\left\{
    \begin{array}{l} (\star,\star) \textrm{  , if   }x_i \geq y_i\textrm{  and  }\nexists k \;,\; y_i \leq v_{i,k} \leq x_i \\
    (\max \{v\in V_i, v \leq x_i\},\min \{v\in V_i, v\geq y_i\})\textrm{, else.} \\ \end{array} \right.$

\end{definition}

Note that queries expressed on alternatives taken inside the reference scale $V$ are left unmodified when rounded.

\begin{theorem}[Reduction to the reference scale]
\label{rouding} {\ }\newline Let $x,y\in X$. $x$ is necessarily
preferred to $y$ iff the query $(x \succsim_? y)$ is bounded by
$\mathcal{P}$ and $[ x\succsim_? y]\in\mathcal{N}$.
\end{theorem}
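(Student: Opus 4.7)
My plan is to prove the equivalence by separately treating the converse direction (a direct monotonicity argument) and the two parts of the forward direction (boundedness of the query and membership of the rounded query in $\mathcal{N}$), both of which I handle by contraposition.

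The converse is immediate: assume the query is bounded and $[x \succsim_? y] \in \mathcal{N}$. For any compatible utility $u$, on a non-wildcard criterion $i$ Definition \ref{defrounding} gives $\underline{x}_i \leq x_i$ and $\overline{y}_i \geq y_i$, so monotonicity yields $u_i(x_i) - u_i(y_i) \geq u_i(\underline{x}_i) - u_i(\overline{y}_i)$; on a wildcard criterion $x_i \geq y_i$ forces $u_i(x_i) - u_i(y_i) \geq 0$. Summing over $N$ and invoking the hypothesis produces $\sum_{i \in N} (u_i(x_i) - u_i(y_i)) \geq 0$.

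For the contrapositive of boundedness, suppose some criterion $i$ satisfies $y_i > x_i$ and $y_i > v_{i,p_i}$ (the case $x_i < v_{i,1}$ is symmetric). Starting from any compatible utility $u^*$, I perturb it only on criterion $i$ above $v_{i,p_i}$: set $\tilde u_i(v) := u^*_i(v_{i,p_i}) + C \cdot (v - v_{i,p_i})/(x_i^{max} - v_{i,p_i})$ for $v > v_{i,p_i}$ with $C$ a large positive parameter, keep $\tilde u_i = u^*_i$ on $v \leq v_{i,p_i}$, and $\tilde u_j = u^*_j$ for $j \neq i$. Since every value appearing in $\mathcal{P}$ on criterion $i$ lies in $V_i \subseteq \{v \leq v_{i,p_i}\}$, the preferences in $\mathcal{P}$ are evaluated where $\tilde u$ agrees with $u^*$, so $\tilde u$ remains compatible with $\mathcal{P}$ and is monotonic by construction. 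As $C \to \infty$, $\tilde u_i(y_i) - \tilde u_i(x_i)$ diverges to $+\infty$ while all other terms stay bounded, so for $C$ large $\sum_j (\tilde u_j(x_j) - \tilde u_j(y_j)) < 0$; a positive rescaling then restores the normalization without changing the sign, exhibiting a compatible utility that refutes the necessary preference of $x$ over $y$.

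For the contrapositive of the rounded query's membership in $\mathcal{N}$, suppose a compatible utility $u$ makes the sum $\sum_i (u_i(\underline{x}_i) - u_i(\overline{y}_i))$, taken over non-wildcard criteria, strictly negative. I build $u'$ by letting it agree with $u$ on $V$, setting $u'_i(x_i) := u_i(\underline{x}_i)$ and $u'_i(y_i) := u_i(\overline{y}_i)$ on non-wildcard criteria, and on each wildcard criterion assigning a common value $u'_i(x_i) = u'_i(y_i) \in [u_i(v^-), u_i(v^+)]$, where $v^-, v^+$ are the reference scale values framing $[y_i, x_i]$ (the absence of a reference value in $[y_i, x_i]$, guaranteed by the wildcard condition, ensures the interval is nonempty and consistent with monotonicity). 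A monotonic step-function extension to the rest of $X_i$ turns $u'$ into a compatible utility with $\sum_i (u'_i(x_i) - u'_i(y_i)) < 0$, again refuting necessary preference. The main obstacle I foresee is precisely the case-by-case verification that $\tilde u$ and $u'$ are globally monotonic on each $X_i$ and satisfy both $\mathcal{P}$ and the normalization; this works because $V_i$ is defined as exactly the set of values used in $\mathcal{P}$ on criterion $i$, so modifications of $u$ outside $V_i$ (or fresh assignments at $x_i, y_i \notin V_i$) are transparent to $\mathcal{P}$.
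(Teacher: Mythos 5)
Your proposal is correct and follows essentially the same three-part structure as the paper's proof: the paper likewise shows that an unbounded query makes the LP objective unboundedly negative, derives the forward implication by extending a witness $u$ from the rounded alternatives to $x_i,y_i$ (with a free common value on wildcard criteria), and obtains the converse from $x \succeq \underline{x}$ and $\overline{y}\succeq y$. The only cosmetic differences are that you unwind the dominance/transitivity step into an explicit monotonicity inequality and make the divergent-utility construction and renormalization explicit where the paper merely asserts them.
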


\begin{proof}
If the query $(x \succsim_? y)$ is unbounded by $\mathcal{P}$,
then the LP (\ref{NecAsMin}) is unbounded, as the difference
$u_i(y_i) - u_i(x_i)$ is left unbounded by the constraints, and
can be made as large as needed to ensure $\sum_{i\in N} u_i(y_i) >
\sum_{i\in N} u_i(x_i)$. Thus, the LP is unfeasible and $(x
\succsim_? y)\notin \mathcal{N}$.

Else, suppose $\underline{x}$ is necessarily preferred to
$\overline{y}$. The proxy alternatives
$\underline{x},\overline{y}$ are so defined that $x$ dominates
$\underline{x}$ and $y$ is dominated by $\overline{y}$. Hence,
transitivity of the relation $\mathcal{N}$ ensures that $x$ is
necessarily preferred to $y$.

Reciprocally, suppose $\underline{x}$ is not necessarily preferred
to $\overline{y}$ : there is a vector $u = (u_i: V_i \rightarrow
\reel)_{i\in N}$ of increasing functions compatible with the
preferences $\mathcal{P}$ ranking $\underline{x}$ lower than
$\overline{y}$. If required, these functions $(u_i)$ can be
extended, by specifying the images of $(x_i)$ and/or $(y_i)$ when
they are not in the reference scales  $(V_i)$, into another vector
$(\widehat{u}_i)$ of increasing functions compatible with the
preferences $\mathcal{P}$ such that
$\widehat{u}(x)=\widehat{u}(\underline{x})<\widehat{u}(\overline{y})=\widehat{u}(y)$,
i.e. $x$ is not necessarily preferred to $y$. If
$(\underline{x}_i,\overline{y}_i) \neq (\star,\star)$, let
$\widehat{u}_i(x_i) := u_i(\underline{x}_i)$ and
$\widehat{u}_i(y_i) := u_i(\overline{y}_i)$. These definitions
enforce the monotonicity of $\widehat{u}_i$.
If $(\underline{x}_i,\overline{y}_i) = (\star,\star)$, then
$\widehat{u}_i(x_i)=\widehat{u}_i(y_i)$ can be assigned any value
preserving the monotonicity of $\widehat{u}$.

\end{proof}

 \subsection{Covector associated with a query}
The characterization of necessary preferences can be further streamlined. The objective function and the constraints can be expressed as functions of the \emph{elementary preferences} $w_{i,k}:=u_i(v_{i,k+1})-u_i(v_{i,k})$ indexed by $I=\{(i,k), i\in N, 1 \leq k <
p_i\}$ :

$$ \left.\begin{array}{ll}\forall i \in N \; , \; \forall t \in V_i \;,\; u_i(t) &\displaystyle = u_i(v_{i,1}) + \sum_{\begin{array}{c}\scriptstyle 1\  \leq \ k \  <\ p_i \\[-5pt] \scriptstyle v_{i,k+1}\ \leq \ t \\ \end{array}}\left(u_i(v_{i,k+1})-u_i(v_{i,k})\right)\\[+7pt]  & \displaystyle = u_i(v_{i,1}) + \sum_{\begin{array}{c}\scriptstyle 1 \ \leq \ k \ <\ p_i \\[-5pt] \scriptstyle v_{i,k+1}\ \leq \ t \\ \end{array}}w_{i,k}\end{array}\right.$$


Hence,

$$ \forall i \in N \; , \; \forall x_i,y_i \in V_i \;,\; u_i(x_i)-u_i(y_i) = \sum_{\begin{array}{c}\scriptstyle 1\  \leq\  k\  <\ p_i \\[-5pt] \scriptstyle v_{i,k+1}\ \leq \ x_i \\ \end{array}} w_{i,k} - \sum_{\begin{array}{c}\scriptstyle 1 \ \leq \ k\  <\ p_i \\[-5pt] \scriptstyle v_{i,k+1}\ \leq \ y_i \\ \end{array}}w_{i,k}$$

Consider the contribution of the elementary preference $w_{i,k}$
to the right hand side : $w_{i,k}$ contributes in favor of
alternative $x$ with coefficient $+1$ iff $x_i \geq v_{i,k+1} >
v_{i,k} \geq y_i$, in favor of $y$ with coefficient $-1$ iff $y_i
\geq v_{i,k+1} > v_{i,k} \geq x_i$, and does not contribute (with
coefficient $0$) otherwise. Consequently, we introduce the
covector form of a query (for illustration see Figure \ref{fig_cov}):

\begin{definition}[Covector associated to a preference query]
\label{defcovector} {\ }\newline
$$\left.\begin{array}{lll} V \times V &\rightarrow &\{-1,0,1\}^I\\ (x \succsim_? y) & \mapsto & (x\succsim_? y)^{\star} \\ \end{array}\right.$$
$$  (x\succsim_? y)^{\star}_{i,k}=\left\{
    \begin{array}{lll}
        +1 &  \mbox{if  }\;  x_i \geq v_{i,k+1}>v_{i,k}\geq y_i & (w_{i,k}\textrm{ is an argument supporting }x)\\
        -1 &  \mbox{if} \; y_i \geq v_{i,k+1}>v_{i,k}\geq x_i& (w_{i,k}\textrm{ is an argument supporting }y)\\
        0 & \mbox{else } & (w_{i,k}\textrm{ is a neutral argument})\\
    \end{array}
\right.$$

\end{definition}


The set of covectors associated to a dominance statement is
$\{0,1\}^I$. A dominance covector can be broken down as a sum of
elementary dominance covectors $d^\star_{i,k}$ for $(i,k)\in I$ \ :\
$(d^\star_{i,k})_{i',k'} = 1$ if $i=i'$ and $k=k'$, 0
otherwise. Actually, $(d^\star)=((1,0,\ldots,0), (0,1,0,\ldots,
0), \ldots, (0, \ldots, 0,1))$ is the canonical covector base.

The introduction of covectors yields to a concise formulation :

$$ \forall x,y \in V \;,\; \sum_{i\in N} u_i(x_i)-u_i(y_i) = \sum_{(i,k)\in I}(x \succsim_? y)^{\star}_{i,k} \ w_{i,k}$$

Finally, using the product notation between covectors and vectors of $\reel^I$ to omit indexes :

$$ \forall x,y \in V \;,\; u(x) - u(y) = (x \succsim_? y)^{\star} \ w$$

Consequently, the following theorem links the transitivity of the
relation $\mathcal{N}$ and the sum of covectors.

\begin{theorem}[Chasles relation for covectors]
\label{Chasles} {\ \newline} Let $x, y, z \in V$
$$ (x \succsim_? y)^\star + (y\succsim_? z)^\star = (x \succsim_?
z)^\star$$
\end{theorem}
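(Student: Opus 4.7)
The identity is pointwise in $(i,k) \in I$, so my plan is to fix $(i,k)$ and rewrite the covector coordinate as a difference of a simple indicator evaluated at the two options, then conclude by telescoping.

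The key observation is that on the reference scale $V_i$, the pair of adjacent levels $v_{i,k} < v_{i,k+1}$ partitions $V_i$ into two disjoint pieces: every $t \in V_i$ satisfies exactly one of $t \leq v_{i,k}$ or $t \geq v_{i,k+1}$. Since $x, y, z \in V$, each of $x_i, y_i, z_i$ lies in $V_i$ and hence on one of these two sides. I would therefore introduce the indicator
\[
\sigma_{i,k}(t) = \begin{cases} 1 & \text{if } t \geq v_{i,k+1},\\ 0 & \text{if } t \leq v_{i,k}, \end{cases}
\]
which is well-defined and two-valued on $V_i$.

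Next I would unfold Definition \ref{defcovector} and show, by inspecting its three cases, that for any $a,b \in V$,
\[
(a \succsim_? b)^\star_{i,k} = \sigma_{i,k}(a_i) - \sigma_{i,k}(b_i).
\]
Indeed, the value $+1$ occurs exactly when $\sigma_{i,k}(a_i)=1$ and $\sigma_{i,k}(b_i)=0$; the value $-1$ exactly when $\sigma_{i,k}(a_i)=0$ and $\sigma_{i,k}(b_i)=1$; and $0$ when $\sigma_{i,k}(a_i) = \sigma_{i,k}(b_i)$, covering the ``else'' branch regardless of whether both coordinates lie above or below the threshold. This identification is the only step with content, and I expect it to be the main (mild) obstacle, because one must verify that the ``else'' branch genuinely corresponds to equality of the two indicators — which is where the partition property of the adjacent levels $v_{i,k} < v_{i,k+1}$ on $V_i$ is used.

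Finally, summing the two claimed identities gives
\[
(x \succsim_? y)^\star_{i,k} + (y \succsim_? z)^\star_{i,k} = \bigl(\sigma_{i,k}(x_i) - \sigma_{i,k}(y_i)\bigr) + \bigl(\sigma_{i,k}(y_i) - \sigma_{i,k}(z_i)\bigr) = \sigma_{i,k}(x_i) - \sigma_{i,k}(z_i),
\]
which equals $(x \succsim_? z)^\star_{i,k}$ by the same identification. Since this holds for every $(i,k)\in I$, the Chasles relation follows.
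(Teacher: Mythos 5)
Your proof is correct, and it is essentially the argument the paper itself relies on: the displayed identity $u_i(t) = u_i(v_{i,1}) + \sum_{k:\, v_{i,k+1} \leq t} w_{i,k}$ preceding the theorem says precisely that the coefficient of $w_{i,k}$ in $u_i(t)$ is your indicator $\sigma_{i,k}(t)$, so the covector coordinate is the difference of indicators and the Chasles relation follows by telescoping exactly as you write. Your version merely makes explicit the case check against Definition \ref{defcovector} that the paper leaves implicit.
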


\begin{figure}[ht!]
\begin{center}
\begin{tikzpicture}
\tikzstyle{alt-O}=[circle,inner sep = 2.5pt,draw, fill=black!50, line width=0.5pt]
\tikzstyle{alt-X}=[rectangle, inner sep = 3pt, draw, fill=black!50, line width=0.5pt]

\path 
node at  ( 2.2,3.5)  [] {$\star$}
node at ( 1.65,3.5)  [] {$w_{k,2}$}
node at ( 1.65,1.5)  [] {$w_{k,1}$}
node at ( -2.35,3.6)  [] {$w_{i,3}$}
node at ( -2.35,2.5)  [] {$w_{i,2}$}
node at ( -2.35,1.4)  [] {$w_{i,1}$}
node at ( 0,4.5) [alt-X] {}
node at ( 0,0.5)  [alt-O] {}
node at ( -0.35,3.5)  [] {$w_{j,2}$}
node at ( -0.35,1.5)  [] {$w_{j,1}$}
node at ( -2,2.8)  [alt-O] {}
node at  (-2,2.2)  [alt-X] {};
\%draw [->,decorate,decoration={snake,amplitude=.4mm,segment length=2mm,post length=1mm}] (2.1,2.9) -- (2.1,3.49); 
\path node (bottom0) at (-2,0) [below] {\small{criterion $i$}}
node (top0) at (-2,5) {}; 
\path node (bottom1) at (0,0) [below] {\small{criterion $j$}}
node (top1) at (0,5) {}; 
\path node (bottom2) at (2,0) [below]  {\small{criterion $k$}}
node (top2) at (2,5) {}; 

\draw (-2.2,0.5) [thick] -- (-1.8,0.5);
\draw (-2.2,2.2) [thick] -- (-1.8,2.2);
\draw (-2.2,2.8) [thick] -- (-1.8,2.8);
\draw (-2.2,4.5) [thick] -- (-1.8,4.5);

\draw[decorate,thick,decoration={brace,raise=0.1cm}]
(-1.8,4.5) -- (-1.8,2.8) node[right=0.15cm,pos=0.5] {$0$};

\draw[decorate,thick,decoration={brace,raise=0.1cm}]
(-1.8,2.2) -- (-1.8,0.5) node[right=0.15cm,pos=0.5] {$0$};

\draw[decorate,thick,decoration={brace,raise=0.1cm}]
(-1.8,2.8) -- (-1.8,2.2) node[right=0.15cm,pos=0.5] {$+1$};

\draw (-0.2,0.5) [thick] -- (0.2,0.5);
\draw (-0.2,2.5) [thick] -- (0.2,2.5);
\draw (-0.2,4.5) [thick] -- (0.2,4.5);

\draw[decorate,thick,decoration={brace,raise=0.1cm}]
(0.2,2.45) -- (0.2,0.5) node[right=0.15cm,pos=0.5] {$-1$};

\draw[decorate,thick,decoration={brace,raise=0.1cm}]
(0.2,4.5) -- (0.2,2.55) node[right=0.15cm,pos=0.5] {$-1$};

\draw (1.8,0.5) [thick] -- (2.2,0.5);
\draw (1.8,2.5) [thick] -- (2.2,2.5);
\draw (1.8,4.5) [thick] -- (2.2,4.5);

\draw[decorate,thick,decoration={brace,raise=0.1cm}]
(2.2,4.5) -- (2.2,2.5) node[right=0.15cm,pos=0.5] {$0$};
\draw[decorate,thick,decoration={brace,raise=0.1cm}]
(2.2,2.5) -- (2.2,0.5) node[right=0.15cm,pos=0.5] {$0$};

\draw [->,thick] (bottom0) -- (top0);
\draw [->,thick] (bottom1) -- (top1);
\draw [->,thick] (bottom2) -- (top2);
\end{tikzpicture}
\caption{Elementary preferences $w$ and associated covector coefficients, where $\bullet$ and  $\filledsquare$ represent respectively the values $\underline{x}$ and $\overline{y}$ on the criteria}.\label{fig_cov}
\end{center}

\end{figure}
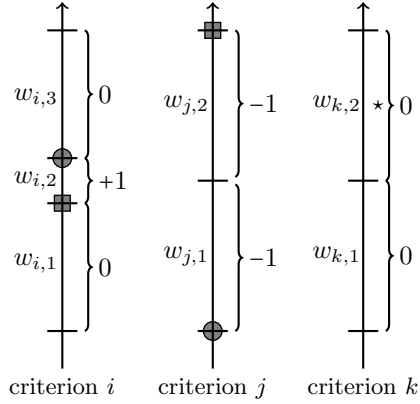


\subsection{Characterizations of the necessary preference relation}

In this subsection, we provide three equivalent linear programming formulations permitting to answer a necessary preference query. The first one is expressed as a condition on the elementary preferences, while the last two check a linear dependence relation between linear forms representing preferences.

\begin{theorem}[Primal atomic characterization of necessary preference]
\label{primal}
{\ }
\begin{center}
 $x$ is necessarily preferred to $y$
\end{center}
    $$\Updownarrow$$
$(x\succsim_? y)$ is bounded by $\mathcal{P}$ and $[x \succsim_?
y] ^{\star} \ w \geq 0$ for all $w\in\reel^I$ s.t.
$$\left\{
\begin{array}{ll}
 \displaystyle w_{i,k} \geq 0 & \forall (i,k)\in I \textrm{ (monotonicity) }
 \\  \displaystyle  (a \succsim b)^{\star} \ w \geq 0 & \forall (a \succsim b)\in \mathcal{P}.
\end{array}
\right.$$
\end{theorem}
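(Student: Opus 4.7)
The plan is to reduce the theorem to a change of variables in the LP \refe{NecAsMin}, using Theorem~\ref{rouding} to dispose of the bounded/unbounded dichotomy and the covector identity $u(x)-u(y)=(x\succsim_? y)^\star\, w$ established just before Theorem~\ref{Chasles} to rewrite both the objective and the preference constraints.

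First, I would handle the unbounded case immediately by citing Theorem~\ref{rouding}: if $(x\succsim_? y)$ is unbounded by $\mathcal{P}$, then $x$ is not necessarily preferred to $y$, so the left-hand side of the equivalence is false, and the right-hand side is false as well since it requires boundedness. Hence it suffices to treat bounded queries and, again by Theorem~\ref{rouding}, to prove the equivalence for the rounded query $[x\succsim_? y]=(\underline{x}\succsim_?\overline{y})$ whose two alternatives live in~$V$ (with the convention that the wildcard~$\star$ contributes $u_i(\star)-u_i(\star)=0$, matching the 0 coefficients of the associated covector).

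Second, I would perform the change of variables $w_{i,k}:=u_i(v_{i,k+1})-u_i(v_{i,k})$ in the LP~\refe{NecAsMin} restricted to reference values. The monotonicity constraints $u_i(v_{i,k})-u_i(v_{i,k'})\geq 0$ for $k'<k$ are exactly the partial sums $w_{i,k'}+\cdots+w_{i,k-1}\geq 0$, and these are jointly equivalent to the atomic constraints $w_{i,k}\geq 0$ for all $(i,k)\in I$. The preferential constraints $\sum_i u_i(a_i)-u_i(b_i)\geq 0$ for $(a\succsim b)\in\mathcal{P}$ become $(a\succsim b)^\star\, w\geq 0$ by the covector identity. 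The objective likewise rewrites as $[x\succsim_? y]^\star\, w$. Thus the LP \refe{NecAsMin} for the rounded query is feasible with nonnegative optimum if and only if $[x\succsim_? y]^\star\, w\geq 0$ holds for every $w\in\reel^I_{\geq 0}$ satisfying the rewritten preference constraints, which is the statement to prove.

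The only technical point requiring care is that the map $u\mapsto w$ is really a bijection between admissible $u$-vectors (modulo the irrelevant additive constants $u_i(v_{i,1})$, which cancel in every difference appearing in the LP) and admissible $w$-vectors in $\reel^I_{\geq 0}$. Going forward: given any $w\geq 0$ satisfying the covector constraints, set $u_i(v_{i,1}):=0$ and $u_i(v_{i,k+1}):=u_i(v_{i,k})+w_{i,k}$; this yields an increasing $u_i$ compatible with $\mathcal{P}$ whose differences agree with $w$. Conversely, any admissible $u$ yields by definition an admissible $w$, and both sides of the LP depend only on differences $u_i(v_{i,k+1})-u_i(v_{i,k})$. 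This correspondence is the main ingredient but is essentially routine once Theorem~\ref{rouding} and the covector identity are in place; no further case analysis on the $\star$ entries is needed since those criteria contribute zero to both the objective and every preference constraint.
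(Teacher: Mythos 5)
Your proposal is correct and follows essentially the route the paper itself takes: the paper states Theorem~\ref{primal} without a separate proof precisely because it is meant to follow from Theorem~\ref{rouding} together with the LP formulation \refe{NecAsMin} and the change of variables $w_{i,k}=u_i(v_{i,k+1})-u_i(v_{i,k})$ developed in the covector subsection, which is exactly the reduction you carry out (including the observation that the pairwise monotonicity constraints collapse to the atomic ones and that the map $u\mapsto w$ is onto the admissible cone). The only detail worth keeping explicit, as you do, is the treatment of the wildcard entries as contributing zero on both sides.
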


Theorem \ref{primal} references $[x \succsim_? y]^{\star}$, the
covector associated to a bounded query after it has been rounded
to the reference space $V$. 
Such a  covector can be computed in two
phases, going successively through definitions \ref{defrounding}
and \ref{defcovector}. However, this process can be shortened. For
criterion $i\in N$:

\begin{itemize}
\item If $x_i \geq y_i$, only the intervals $[v_{i,k} ,v_{i,k+1}]$ fully included in the \emph{asset} $[x_i , y_i]$ are taken
into account positively. Other intervals are neutral.
\item If
$y_i
> x_i$, all the intervals $[v_{i,k} , v_{i,k+1}]$ necessary to
cover the \emph{liability} $]x_i , y_i[$ are taken into account
negatively. Other intervals are neutral. There is a possibility
for insufficient coverage, when $(x \succsim_? y)$ is unbounded by
$\mathcal{P}$.
\end{itemize}

\begin{theorem}[Covector associated to a rounded query]
{\ }\newline Let $(x \succsim_? y)$ a preference query bounded by
$\mathcal{P}$. The coefficients of the covector $[x \succsim_?
y]^{\star}$ representing its associated rounded query are given by
:

$${[x \succsim_?
y]^{\star}}_{i,k}= \left\{
    \begin{array}{ll}
        +1 &  \mbox{if  }\;  [v_{i,k} , v_{i,k+1}] \subset [y_i , x_i] \\
        -1 &  \mbox{if} \; [v_{i,k} , v_{i,k+1}] \cap ]x_i , y_i[ \ne \emptyset\\
        0 & \mbox{else } \\
    \end{array}
\right.$$

\end{theorem}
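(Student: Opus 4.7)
Because Definition \ref{defrounding} rounds each criterion independently and both the covector obtained via Definition \ref{defcovector} and the proposed characterization decompose across $(i,k)\in I$, it suffices to fix a criterion $i$ and verify agreement in each of the three disjoint subcases arising in Definition \ref{defrounding}.

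\textbf{First subcase (neutral criterion).} When $x_i \geq y_i$ and no reference value lies in $[y_i, x_i]$, Definition \ref{defrounding} sets $(\underline{x}_i, \overline{y}_i) = (\star,\star)$ so every $[x \succsim_? y]^\star_{i,k}$ is $0$. On the claimed side, $\,]x_i, y_i[\,$ is empty (ruling out the $-1$ branch), and no $[v_{i,k}, v_{i,k+1}]$ can be contained in $[y_i, x_i]$ since that would place a reference value in $[y_i, x_i]$ (ruling out the $+1$ branch). Both sides vanish.

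\textbf{Second subcase (concordant, $x_i \geq y_i$ with a reference value in between).} Here $\underline{x}_i \geq \overline{y}_i$ kills the $-1$ branch of Definition \ref{defcovector}, while $\,]x_i, y_i[\, = \emptyset$ kills the $-1$ branch of the claim. For the remaining $+1$ branches, I would unfold the defining properties of $\underline{x}_i$ (largest reference value $\leq x_i$) and $\overline{y}_i$ (smallest reference value $\geq y_i$): since $v_{i,k}$ and $v_{i,k+1}$ are themselves reference values, $v_{i,k} \geq \overline{y}_i \Leftrightarrow v_{i,k} \geq y_i$ and $v_{i,k+1} \leq \underline{x}_i \Leftrightarrow v_{i,k+1} \leq x_i$, giving $\underline{x}_i \geq v_{i,k+1} > v_{i,k} \geq \overline{y}_i \Leftrightarrow [v_{i,k}, v_{i,k+1}] \subset [y_i, x_i]$.

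\textbf{Third subcase (discordant, $y_i > x_i$).} This is the main step. Boundedness together with $x_i < y_i$ gives $\underline{x}_i \leq x_i < y_i \leq \overline{y}_i$, so $\underline{x}_i < \overline{y}_i$ kills both $+1$ branches, and only the $-1$ cases need matching. Writing $\underline{x}_i = v_{i,k^\ast}$ and $\overline{y}_i = v_{i,k^{\ast\ast}}$, Definition \ref{defcovector} gives coefficient $-1$ iff $k^\ast \leq k \leq k^{\ast\ast}-1$. I would show the claim's intersection condition $[v_{i,k}, v_{i,k+1}] \cap \,]x_i, y_i[\, \ne \emptyset$ carves out the same index range, by factoring it into $v_{i,k} < y_i$ (equivalent to $k < k^{\ast\ast}$ by minimality of $\overline{y}_i$) and $v_{i,k+1} > x_i$ (equivalent to $k+1 > k^\ast$ by maximality of $\underline{x}_i$). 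The mildly delicate point, and the only one requiring care, is this last translation from the intersection condition to an index inequality; everything else is bookkeeping. Assembling the three subcases delivers the stated formula.
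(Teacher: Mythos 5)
Your proof is correct: the criterion-by-criterion case analysis, the reduction of the $+1$ and $-1$ conditions to index inequalities via the extremal characterizations of $\underline{x}_i$ and $\overline{y}_i$, and the dispatch of the vacuous branches in each subcase all check out. The paper actually states this theorem without a formal proof, offering only the informal asset/liability discussion that precedes it, and your argument is precisely the rigorous version of that same decomposition, so there is nothing to object to.
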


Establishing the preference of $x$ over $y$ can thus be seen as
covering the liabilities of $y$ by the assets of $x$. We call this
characterization atomic, as it proceeds without splitting the
fundamental resources $]v_{i,k}, v_{i,k+1}[$. When such a resource
would be split, it retains its full cost $w_{i,k}$ as a liability
but loses all its buying power as an asset. When the whole range
$]v_{i,1}, v_{i,p_i}[$ is not enough to meet the needed cost,
preference is impossible.


%

%

\begin{theorem}[Dual characterization of necessary preference, LP form]
\label{lp}{\ }
\begin{center}
 $x$ is necessarily preferred to $y$
\end{center}
    $$\Updownarrow$$
    the query $(x\succsim_? y)$ is bounded by $\mathcal{P}$ and there are real nonnegative
    coefficients $(\lambda)_{\mathcal{P}}$ and $(\mu)_I$ such that :
    $$ [x \succsim_? y]^{\star} = \sum_{(a \succsim b)\in \mathcal{P}}\lambda_{a,b}  (a \succsim
    b)^{\star} + \sum_{(i,k)\in I}\mu_{i,k} d^\star_{i,k} $$
    where the $(d^\star)_I$ are the elementary dominance covectors.
\end{theorem}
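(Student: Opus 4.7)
The plan is to derive the dual form directly from the primal characterization of Theorem \ref{primal} via an application of Farkas' lemma. The boundedness condition appears identically in both characterizations, so I only need to argue the equivalence between the two linear algebraic conditions in the bounded case.

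First, I would rephrase the primal characterization of Theorem \ref{primal} in a form amenable to Farkas. Writing the monotonicity constraints $w_{i,k} \geq 0$ as $d^\star_{i,k} \cdot w \geq 0$ using the elementary dominance covectors, Theorem \ref{primal} states exactly that $x$ is necessarily preferred to $y$ iff the query is bounded and the implication
$$ \forall (i,k)\in I \; , \; d^\star_{i,k} \cdot w \geq 0 \textrm{ and } \forall (a\succsim b)\in\mathcal{P} \; , \; (a\succsim b)^\star \cdot w \geq 0 \;\Longrightarrow\; [x\succsim_? y]^\star \cdot w \geq 0 $$
holds for every $w \in \reel^I$. In other words, $[x\succsim_? y]^\star \cdot w \geq 0$ whenever $w$ belongs to the polyhedral cone $C \subset \reel^I$ generated by the dual inequalities associated with $\{d^\star_{i,k}\}_{(i,k)\in I} \cup \{(a\succsim b)^\star\}_{(a\succsim b)\in\mathcal{P}}$.

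Second, I would invoke Farkas' lemma in its standard homogeneous form: for a finite family of vectors $\{v_1,\ldots,v_m\}$ in $\reel^I$ and any vector $c\in\reel^I$, the implication ``$v_j \cdot w \geq 0$ for every $j$ implies $c\cdot w \geq 0$'' is equivalent to the existence of nonnegative scalars $\alpha_1,\ldots,\alpha_m \geq 0$ such that $c = \sum_j \alpha_j v_j$. Applying this with the finite family $\{d^\star_{i,k}\}_{(i,k)\in I} \cup \{(a\succsim b)^\star\}_{(a\succsim b)\in\mathcal{P}}$ and target $c = [x\succsim_? y]^\star$ yields exactly the decomposition claimed in the theorem, with the nonnegative multipliers labelled $\mu_{i,k}$ and $\lambda_{a,b}$.

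The only real subtlety is making sure the boundedness assumption is threaded through both directions: if the query is unbounded, Theorem \ref{primal} already rules out necessary preference on the primal side, and on the dual side $[x\succsim_? y]^\star$ is not even well-defined since Definition \ref{defrounding} only applies to bounded queries. Since the dual statement explicitly restricts to bounded queries, both sides agree trivially on unbounded ones, and the Farkas argument handles the bounded case. The main ``obstacle'' is really just recognizing the situation as a textbook instance of Farkas' lemma and being careful to write both constraint families in homogeneous inequality form so the lemma applies directly.
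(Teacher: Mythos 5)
Your proposal is correct and follows essentially the same route as the paper: both derive the dual form from Theorem \ref{primal} by a direct application of Farkas' lemma, the only cosmetic difference being that you state Farkas in its implication/dual-cone form while the paper states it as the inconsistency of a system containing the strict inequality $-[x \succsim_? y]^{\star}\, w > 0$. Your handling of the boundedness condition also matches the paper's (the unbounded case is excluded on both sides).
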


\begin{proof}
Thanks to Theorem \ref{primal}, the necessary preference of $x$
over $y$ is characterized by the inconsistency of the linear
system of inequations in the variable $w\in\reel^I$ : $$\left\{
\begin{array}{ll}
 \displaystyle -[x \succsim_? y]^{\star}\ w > 0 & \\
 \displaystyle d^\star_{i,k}\ w \geq 0 & \forall (i,k)\in I
 \\  \displaystyle  (a \succsim b)^{\star} \ w \geq 0 & \forall (a \succsim b)\in \mathcal{P}.\\
\end{array}
\right.$$ According to Farkas's lemma \cite{man69}, this inconsistency is
equivalent to the linear dependance of the covectors written in
the theorem.
%
%
%
%

\end{proof}

This LP formulation is used in Algorithm \ref{alg1} : \texttt{FindExplanation} described in Section \ref{sec_binary}.

Following Definition \ref{defNPR}, the necessary preference relation could
appear as a black box, as it interprets a minimization over a
functional space. From theorem \ref{lp}, this relation corresponds to  the
conical combination of the assertions expressed in $\mathcal{P}$
and of the elementary assertions derived from monotonicity (see also \cite{SplietTervonen14}). We
believe that actually strengthens the importance of the necessary
preference relation, as one of the most fundamental tools to
explore preference.

The linear combination can be constrained to use rational numbers only, yielding to an ILP formulation :
\begin{theorem}[dual characterization of necessary preference, ILP form]
\label{ilp}
{\ }\begin{center}
 $x$ is necessarily preferred to $y$\end{center}
   $$\Updownarrow$$
   the query $(x\succsim_? y)$ is bounded by $\mathcal{P}$ and there is a positive integer $r$ and non negative integers $(\ell)_{\mathcal{P}}$ and $(m)_I$ such that :
   $$ r [x \succsim_? y]^{\star} = \sum_{(a \succsim b)\in \mathcal{P}}\ell_{a,b}  (a \succsim
   b)^{\star} + \sum_{(i,k)\in I} m_{i,k} d^\star_{i,k} $$ 
   where the $(d^\star)_I$ are the elementary dominance covectors.
\end{theorem}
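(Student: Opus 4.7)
The plan is to derive Theorem \ref{ilp} from Theorem \ref{lp}. The key observation is that all the data involved --- the covectors $(a\succsim b)^\star$, the elementary dominance covectors $d^\star_{i,k}$, and the target $[x\succsim_? y]^\star$ --- have entries in $\{-1,0,1\} \subset \zent$. Thus the conic identity characterizing necessary preference is a linear system with integer coefficients, and feasibility over $\reel_{\geq 0}$ must imply feasibility over $\rat_{\geq 0}$, from which an integer solution follows by clearing denominators.

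The reverse direction is immediate: if integers $r > 0$ and $\ell,m \geq 0$ satisfy the displayed identity, then the non-negative rationals $\lambda_{a,b} := \ell_{a,b}/r$ and $\mu_{i,k} := m_{i,k}/r$ form a real conical combination of exactly the form required by Theorem \ref{lp}, so $x$ is necessarily preferred to $y$.

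For the forward direction, assume $x$ is necessarily preferred to $y$. By Theorem \ref{lp}, the query $(x\succsim_? y)$ is bounded by $\mathcal{P}$ and the polyhedron
$$P := \left\{(\lambda,\mu) \geq 0 \; : \; \sum_{(a\succsim b)\in\mathcal{P}} \lambda_{a,b}\,(a\succsim b)^\star + \sum_{(i,k)\in I}\mu_{i,k}\, d^\star_{i,k} = [x\succsim_? y]^\star\right\}$$
is non-empty. Both the constraint matrix and the right-hand side have integer entries, so $P$ is a rational polyhedron. Classical linear programming theory (Cramer's rule applied to any basic feasible solution of $P$) then yields a rational point $(\lambda^\ast,\mu^\ast)\in P$. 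Letting $r$ be the least common multiple of the denominators appearing in the coordinates of $(\lambda^\ast,\mu^\ast)$ and setting $\ell_{a,b} := r\lambda^\ast_{a,b}$ and $m_{i,k} := r\mu^\ast_{i,k}$ produces non-negative integers satisfying $r\,[x\succsim_? y]^\star = \sum \ell_{a,b}\,(a\succsim b)^\star + \sum m_{i,k}\,d^\star_{i,k}$, as required.

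The only non-bookkeeping step is the extraction of the rational solution from the real one, and I expect this to be the main (very minor) obstacle. It rests on the classical fact that a non-empty rational polyhedron contains rational points, which itself follows from the rationality of basic feasible solutions of linear programs with integer data. No new geometric insight beyond Theorem \ref{lp} is needed; Theorem \ref{ilp} is essentially a refinement observing that the conic certificate of necessary preference can always be chosen with integer coefficients.
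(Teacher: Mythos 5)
Your proof is correct and takes essentially the same route as the paper's: both reduce Theorem~\ref{ilp} to Theorem~\ref{lp} by noting that the certifying system has coefficients in $\{-1,0,1\}$, so real feasibility implies rational feasibility, after which clearing denominators yields the integer form. You merely make explicit the rational-point extraction (via basic feasible solutions of a rational polyhedron) and the trivial converse, both of which the paper leaves implicit.
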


\begin{proof}
The coefficients $(\lambda)$ and $(\mu)$ of theorem \ref{lp} are
solutions of an affine system of equations with coefficents in
$\{-1,0,1\}$. The solution set of this system is non-empty in
$\reel$ iff it is non empty in $\rat$. Thus, let
$\lambda=\frac{\ell}{r}$ and $\mu=\frac{m}{r}$, fractions of
integers with common denominator $r$. Multiplying by $r$ the
relation with $\lambda,\mu$ yields the relation sought with $r,
\ell, m$.
\end{proof}

Note that the characterization proposed in the previous results is different from the characterization results shown for the additive utility model in conjoint measurement \cite[chapter 9]{krlusutv71}.
In Theorem \ref{ilp}, we provide necessary and sufficient condition for a given alternative $x$ to be necessarily preferred to an alternative $y$, while axiomatic results in decision theory are interested in finding necessary and sufficient conditions on the preference relation to be representable by a given model, where these conditions take the form of properties that the relation shall satisfy.
For instance the additive utility model is characterized by Archimedean property, double cancellation and independence, if the attributes are continuous \cite[chapter 6]{krlusutv71}.
However, this characterization does not apply to the necessary preference relation.

 Obviously, Theorem \ref{lp} is more convenient
than Theorem \ref{ilp} for implementation purpose. Nevertheless,
we believe the characterization given by theorem \ref{ilp} gives
an even deeper insight into the necessary preference relation.
Indeed, the query appears to be a linear combination with integer coefficients of the preferential information and of dominance.
This ILP form is leveraged to prove, in Section \ref{sec_charac}, a result on the length of the explanation in a particular case -- see Theorem \ref{tbt} (the proof is presented in  \ref{Proofs}).


\section{Explanation Engine}

\label{sec_explaining}

In this section, we present an original approach to explain a  robust preference. We recall that our decision model is based on a robust additive utility model (see Section \ref{sec_model}). A straightforward way of providing explanation would use the computed utility functions and exhibit them to the decision maker. 
The difficulty of this approach is twofold.
Firstly, utilities are basically non normalized as they return values in different scales, thus it is not always easy to interpret them.
One easy way to solve this issue is to impose that the extreme values on each attribute are commensurate , this is the fact that valuations on different criteria can be compared.
More precisely, criteria are supposed to be \emph{not satisfied at all} for every minimum element of the attributes, and criteria are supposed to be \emph{completely satisfied} for every maximum element of the attributes.
This amounts to transform the additive utility model into a weighted sum, where the utilities are normalized and a weight is assigned to each criterion.
One can then interpret the overall score in terms of the importance of criteria and the degree of satisfaction of criteria \cite{klein94,caremoor06,LabreucheAIJ2011}.
Once more, this conversion requires to add a commensurability assumption, which one may not accept.
In this latter situation, the utility functions in the additive utility model are meaningless to the decision maker.
Secondly, the previous process is not possible when the utility functions are not precisely known, which is the case with the robust preference relation. 

We propose to explore a different venue in order to provide explanations easily understandable by the DM : sequences of simple preference statements, that can be followed step by step to check the decision process. This section is dedicated to detail this notion of explanation.

\subsection{Explanations based on preference swaps}

Our idea of explanation is somewhat reminiscent of the even swaps method (see Section \ref{sec_model2}) which is an interactive process aimed at building a sequence of equally preferred (\emph{even}) alternatives, where only two attributes are modified at each step (\emph{swaps}),  initiated by the preferred term of the query  and the final term of the sequence dominates the less preferred term of the query. This constructive method is quite intuitive as only two attributes are involved in each even swap, and utilities are never explicitly mentioned to the decision maker.  In fact, if we consider the example displayed in \cite{hamkeerai98}, we can easily observe that from the reasoning steps of the even swaps process we can deduce an intuitive and simple manner to explain the result to the decision maker, without referring explicitly to the utility function or the model used to get the solution.
We keep the idea of explanations made of an easy to follow sequence of alternatives, but we have to take into account several obstacles to import the even swap process wholesale. While the even swap method involves an interactive process aimed at choosing the best alternative through a sequence of indifference statements (and deletion of non discriminating criteria), our decision model is more cautious (robust and non-parametric), and we aim at providing an explanation without further interaction with the decision maker. In this context of deliberately frugal information, exploring an isopreference curve of the decision maker by asking equivalence queries is clearly not on the table. Therefore, we relax the notion of equal preference, and only require the sequence of alternatives to be of decreasing preference, thus introducing the notion of \emph{preference swap}.

\begin{definition}{(Preference swap)}

When $(x \succsim y) \in \mathcal{N} \setminus \mathcal{D}$, we say that $x \succsim y$ is a \emph{preference-swap of
order $p$} with $p := |\{I \in N, x_i \neq y_i\}| \in \{2,\ldots,n\}$.
\label{swap_pref}
\end{definition}

As preference swaps exclude the dominance, $x$ is preferred to $y$ on at least one attribute, and $y$ is preferred to $x$ on at least one attribute.
We will consider the set $\Delta_p$ containing the pairs of alternatives $(x,y)$ such that $x \succsim y$ is  a preference swap of order $p$,  for $p \in \{2, \dots, n\}$. Clearly, $\mathcal{N}$ is partitioned between the sets $\mathcal{D}, \Delta_2, \dots ,\Delta_n$, whereas the preferential information $\mathcal{P}$ may contain preference swaps on any order.
We believe the order of a preference swap gives an indication about the cognitive complexity of a statement : when alternatives $x$ and $y$ differ on many attributes, the statement $(x \succsim y)$ is hard to understand.

From now on, we define an explanation as a chain of sequences such that each sequence $x^i \succsim x^{i+1}$ is either a preference swap of at most a given order or a dominance relation ($\mathcal{D}$) between the two alternatives. More precisely, this type of explanation transforms one single comparison $x\succsim y$ that the DM needs to understand by a sequence of several preferences $x^i\succsim x^{i+1}$.  The idea is that the initial preference $x\succsim y$ is complex to understand as the values of $x$ and $y$ differ on most  (if not all) attributes, whereas each intermediate comparison $x^i\succsim x^{i+1}$ is much easier to understand as $x^i$ and $x^{i+1}$ differ only on a few attributes.  In other terms,  if $x^i$ and $x^{i+1}$ have the same value on all attributes except  on a few ones, then the preference will be apparent for the DM. We see that the order of the preference swap helps to  better understand  the explanation.
 Thus, we can define an explanation as follows:


\begin{definition}{(Explanation--$\mathcal{E}_k$)}

An explanation of length $q$ of the necessary relation  $x \succsim y$ is a sequence $(x^1 \succsim x^2)
\dots (x^{q-1} \succsim x^q)$ of dominance relation and/or preference  swaps of order at most $k$  ($\mathcal{D} \cup \Delta_2 \cup \cdots \cup \Delta_k$) such that  $x^1= x$ and $x^q =y$ and $x^2, \dots, x^{q-1} \in X$.
\label{def_explain}
\end{definition}

According to Definion \ref{def_explain} the set of statements  that admit an explanation in $\mathcal{D} \cup \Delta_2 \cup \dots \cup \Delta_k$ is noted $\mathcal{E}_k$. 

\begin{remark}
\label{rem1}
We note that  if the definition of explainability by dominance relations and
preference swaps of order  at most $k$ allows the intermediate alternatives $x^2 \succsim \ldots \succsim x^{q-1}$ to be taken anywhere in space $X$, these alternatives can be restricted to the augmented reference
scales $\widehat{V}$ without loss of generality.

\end{remark}

\subsection{Low order preference swaps}

Even though we have defined $\mathcal{E}_k$ for a general index $k$, we will consider from now on only the case $k=2$. The reason is that preference swaps of order $2$ are much simpler to understand for the decision maker.
(However, one may use $\mathcal{E}_k$, with $k>2$, if there is no explanation in $\mathcal{E}_2$ of a given preference in $x \succsim y$.   Obviously,  $\mathcal{E}_2  \subseteq \mathcal{N}$). 


The concept of swaps between \emph{two} attributes is also known in engineering. For instance, the Architecture Tradeoff Analysis Method (ATAM) is used in order to assess software architectures according to ``quality attribute goals''  \cite{kaz-ATAM00}. 
A \emph{trade-off point} is an architecture parameter affecting at least two quality attributes in different directions. 
For example, increasing the speed of the communication channel improves throughput in the system but reduces its reliability.
Thus the speed of that channel is a trade-off point.
The concept of trade-off point in ATAM makes explicit the interdependencies between pairs of attributes.
The decision maker can express preferences among trade-offs from these dependencies.

From Definition  \ref{def_explain} (with $k=2$),  it is important to observe that technically  an explanation is a path  from $x$ to $y$ describing a  decreasing sequence of preferences in the directed graph $G:=( \widehat{V} ,  \mathcal{D} \cup \Delta_2)$, such that the vertices are in $\widehat{V}$ (which  has an exponential size with respect to the number of criteria) and the edges are preference swaps of order 2 or dominance relation (for more details see Section \ref{sec_binary}).  In what follows, we shall see that we can face some technical problems in order to find such a path.

\subsection{ Discussion on some technical challenges with explanation}

In this work we consider that the basic building blocks of an explanation is a preference swap of order 2. Before going any further on how to construct such an explanation, we shall discuss some points  that  we believe important  for the definition of an explanation.  In fact, a  primary question regarding an explanation is to be able to say if it is satisfactory or not for the decision maker. To account for that, we shall discuss several points related to an explanation, namely: existence, length, values of the attributes involved in the sequences and position of the different swaps.

These four points are important both from a theoretical point, a practical point of view and an implementation or operational point of view, as they may have an impact on the impression of the decision maker on the explanation. In order to construct a convincing explanation, the framework should be at the same time theoretically rigorous and pertinent from  a practical (implementation) perspective too. We draw the attention of the reader on the fact that it is outside the scope of the paper to resolve all the points discussed below. Our aim is to highlight some of the complexities around the question of constructing an explanation.

\begin{itemize}

\item  \emph{Existence of an explanation}

The first point to consider in the construction of an explanation is to make sure there is one to be found. Without any additional assumption, it is quite possible that $\mathcal{E}_2 \subsetneq \mathcal{N}$, that  there  are some statements that cannot be explained by preference swaps of order 2 and dominance relations.
%

For instance, in the context of Example \ref{ex2}, we have $(\A, \B, \C, \d) \succsim (\a, \b, \c, \D) \in \mathcal{N} \setminus \mathcal{E}_2$ (see Example \ref{ExNotInE2} in Section \ref{Structure}).

\begin{quote}
Is it possible, by imposing conditions on the preferential information $\mathcal{P}$, to ensure the existence of an explanation from every statement in $\mathcal{N}$? 
\end{quote}

Technically, checking if we can explain  a statement $x \succsim y$  in $\mathcal{E}_2 $, can be seen as determining if the vertices  $x$ and $y$ are \emph{connected} in the directed graph $G:=( \widehat{V} ,  \mathcal{D} \cup \Delta_2)$. Of course, we have efficient algorithms in order to test if a graph is connected or not  \cite{EvenT75}. However, it may be challenging to use them with regard to the size of the graph  in our context.

In Section \ref{sec_binary}, we come up with answers to these challenging questions, under the assumption of \emph{binary reference scale}. In this specific case, the existence of an explanation is guaranteed when preferences from $\mathcal{P}$ only refer to swaps of order 2, and can be efficiently checked otherwise.

\item \emph{Length of an explanation }

A second point that we address here is the length $q$ of the sequence. Indeed, keeping the explanation short has a great bearing on its ability to convince. Even if each elementary comparison $x^i \succsim x^{i+1}$ is trivial for the decision maker, the overall sequence $x \succsim x^2 \succsim \ldots \succsim x^{q-1} \succsim y$ cannot be seen as a convincing explanation if its size exceeds a given value. One then looks for explanations with the smallest possible size.

%

Finding the shortest explanation means  resolving the problem of \emph{shortest path} in the directed graph $G:=( \widehat{V} ,  \mathcal{D} \cup \Delta_2)$.  Thus, the length of a shortest explanation is bounded by the \emph{diameter}\footnote{The diameter of a graph $G$ is the longest distance between two vertices in graph.}
of the graph $G$. Finding such a diameter is a classical problem in graph theory for which we have polynomial algorithm in terms of number of vertices and edges (see for instance \cite{Aingworth:1996}) . However, it can be too slow to be practical in our case with $|\widehat{V} |$ vertices (exponential with respect to the number of criteria).

In what follows, we come up with two results concerning the length of the explanations provided by our framework.
First, without additional assumptions, the length of the explanation is unbounded, as soon as there are 3 criteria or more.

\begin{theorem}[Unbounded length of shortest explanations $\mathcal{E}_2$]
\label{Plength1}{\ }\newline Consider $n = 3$.
For every $p\in \ent^*$, if $X_1 \supseteq \{0,1,2,\ldots,2p\}$,  $X_2 \supseteq \{-p,-p+1,\ldots,-1,0\}$ and $X_3 \supseteq \{-p,-p+1,\ldots,-1,0\}$,
then there exists some preferential information $\mathcal{P}$ and $x,y \in X$ such that
$x \succsim y$ and the minimal length of the explanation in $\mathcal{D} \cup \Delta_2$ is at least $2p$.
\end{theorem}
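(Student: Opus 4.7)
I would prove the theorem by exhibiting an explicit family of instances and performing a counting argument on the evolution of the first coordinate along any explanation chain.

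\textbf{Construction.} Set $x = (0, 0, 0)$ and $y = (2p, -p, -p)$, both lying in $X$ by the hypothesis on the attributes. For each $k \in \{1, \dots, p\}$, include in $\mathcal{P}$ two order-$2$ preference swaps: $\mathcal{P}_k := (2k-2, -(k-1), -(k-1)) \succsim (2k-1, -k, -(k-1))$ (affecting only criteria $1$ and $2$) and $\mathcal{P}_k' := (2k-1, -k, -(k-1)) \succsim (2k, -k, -k)$ (affecting only criteria $1$ and $3$). Read in succession, the list $\mathcal{P}_1, \mathcal{P}_1', \dots, \mathcal{P}_p, \mathcal{P}_p'$ forms a chain of $2p$ preferences in $\mathcal{N}$ from $x$ to $y$; by transitivity this yields $x \succsim y \in \mathcal{N}$, and the chain itself is an explanation of length $2p+1$ in $\mathcal{E}_2$, providing the upper bound witness.

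\textbf{Key lemma.} The heart of the lower bound will be to show that every non-dominance order-$2$ preference swap $(a, b, c) \succsim (a', b', c)$ in $\mathcal{N}$ (criterion $3$ fixed) satisfies $a' = a+1$ with $a$ even, and symmetrically that such a swap fixing criterion $2$ requires $a' = a+1$ with $a$ odd. The rationale: $\mathcal{P}_k$ bounds $u_1(2k-1) - u_1(2k-2)$ solely in terms of $u_2$, while $\mathcal{P}_k'$ bounds $u_1(2k) - u_1(2k-1)$ solely in terms of $u_3$. When $a' \geq a+2$ the interval $\{a+1,\dots,a'\}$ contains at least one even index, so one may construct an admissible $u$ with $u_2$ constant (which forces every odd-indexed jump of $u_1$ to vanish but leaves each even-indexed jump free, compensated by a large $u_3$-jump) such that $u_1(a') - u_1(a) > 0 = u_2(b) - u_2(b')$, defeating the candidate swap. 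The remaining non-dominance configuration ($a > a'$ with $b' > b$) is dispatched by taking $u_1$ constant, which is admissible and makes the required inequality fail.

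\textbf{Counting and conclusion.} With the key lemma, in any explanation $x = x^1 \succsim \dots \succsim x^q = y$ the first coordinate $x^i_1$ changes by exactly $+1$ at each order-$2$ swap involving criterion $1$, by $0$ at each order-$2$ swap on criteria $2$ and $3$, and by a non-positive amount at each dominance step. Since the total change $x^q_1 - x^1_1$ equals $2p$, at least $2p$ of the $q-1$ steps must contribute $+1$, yielding $q \geq 2p+1 \geq 2p$. The main obstacle will be establishing the key lemma: the adversarial utility vectors must be designed to simultaneously satisfy \emph{every} $\mathcal{P}$-constraint while breaking any candidate swap with $|a'-a| \geq 2$ or the wrong parity of $a$, without accidentally invalidating the swaps that should lie in $\mathcal{N}$.
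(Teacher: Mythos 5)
Your construction is exactly the paper's: the same preferential information $\mathcal{P}$ (with explicit common values where the paper uses wildcards), the same $x=(0,0,0)$ and $y=(2p,-p,-p)$, the same characterization of $\Delta_2$ as unit increments of attribute $1$ of the correct parity, and the same conclusion. The only difference is that you make explicit, via the counting argument on the first coordinate, the final step the paper dismisses with ``one can readily see,'' and your adversarial utilities ($u_2$ constant to defeat multi-step or wrong-parity swaps, $u_1$ constant to defeat the reverse orientation) are a sound way to establish the key lemma the paper argues informally.
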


\begin{figure}[ht!]
\begin{center}
\begin{tikzpicture}
\begin{axis}[ view={120}{40}, width=220pt, height=220pt, grid=major, z buffer=sort, xmin=-4,xmax=0, ymin=-4,ymax=0, zmin=0,zmax=4, enlargelimits=false, xtick={-5,...,0}, ytick={-5,...,0}, ztick={0,...,5}, xlabel={$X_3$}, ylabel={$X_2$}, zlabel={$X_1$}, fill=gray!20 ]
\addplot3+[fill opacity=0.3, only marks,scatter,mark=cube*,mark size=20, cube/size x =22, cube/size y =29] coordinates {
(0,-0.2,1)
(-1,-0.2,2)
(0,-1.2,2)
(-2,-0.2,3)
(-1,-1.2,3)
(0,-2.2,3)
(-3,-0.2,4)
(-2,-1.2,4)
(-1,-2.2,4)
(0,-3.2,4)};
\addplot3+[thick,black, solid, line width=2pt, mark=circle] coordinates {(0,0,0) (0,-1,1)};
\addplot3+[thick,black, solid, line width=2pt, mark=circle] coordinates {(0,-1,1) (-1,-1,2)};
\addplot3+[thick,black, solid, line width=2pt, mark=circle] coordinates {(-1,-1,2) (-1,-2,3) };
\addplot3+[thick,black, solid, line width=2pt, mark=circle] coordinates {(-1,-2,3)  (-2,-2,4)};
\end{axis}
\end{tikzpicture}
\end{center}
\caption{Description of the sequence}\label{Fig3D}
\end{figure}
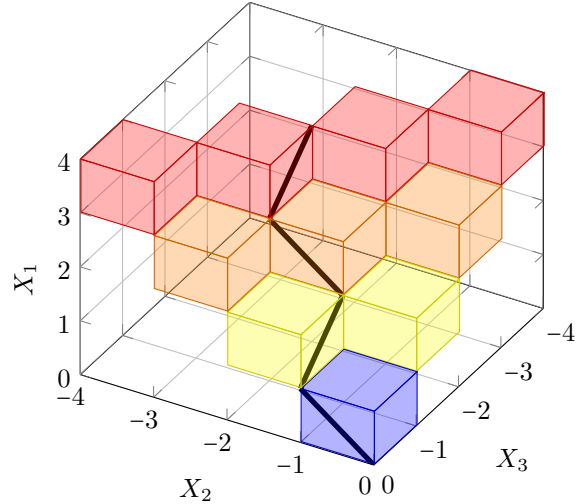

The proof of this result can be found in \ref{Proofs}.

For the sketch of the proof, we construct, for every $p$, a preference between $x=(0,0,0)$ and $y=(2p ,-p,-p)$.
Starting from alternative $(0,0,0)$, we begin with a preference swap between attributes $1$ and $2$ (adding value $1$ on the first attribute, and subtracting $1$ on the second one).
Then we perform a preference swap between attributes $1$ and $3$ (adding value $1$ on the first attribute, and subtracting $1$ on the third one).
We proceed then again by a preference swap between attributes $1$ and $2$, and so on (the sequence is depicted in Figure \ref{Fig3D}).

Second, under the assumption of \emph{binary reference scales}, restricting the number of values that the preferential information can take on each attribute, we provide a tight upper bound on the length of explanations in Section \ref{sec_binary}.

\item  \emph{Values of the terms in the sequence}

Another point concerns the choice of the values of the intermediate alternatives $x^2, \ldots, x^{q-1}$ on the different attributes. In fact, if these values are all different and not chosen appropriately, this can induce a cognitive load to the DM, when analyzing the sequence $x \succsim x^2 \succsim \ldots \succsim x^{q-1} \succsim y$.
Many different choices of the values of the intermediate alternatives $x^2, \ldots, x^{q-1}$ on the different attributes can be considered. A first option is that the value of these alternatives on each attribute can be only the value of $x$ or $y$.  We think that this case is suitable if we need to explain only one comparison $x\succsim y$. However, this is not always the case, and one can ask for explanations for several comparisons $x\succsim y$, $x'\succsim y'$, $x''\succsim y''$, $\ldots$. At this moment,   the intermediate alternatives used for the different explanations will be quite different using the first option. In the second option,  the value of these alternatives on each attribute can only take predefined values (where the predefined list of values does not depend on $x$ and $y$).  
Hence the intermediate alternatives contained in the explanation of $x\succsim y$, $x'\succsim y'$, $\ldots$ use the same values on the attributes, which reduces the workload for the DM.
Some policy concerning the preferential use of some particular values in $\widehat{V}$, favoring the values $V$ of the preferential information, or the values of the attributes of the alternatives being compared, can prove more or less convincing, depending on the context (see Section \ref{sec_binary}).

\item  \emph{Swaps positions}

Finally, another important aspect is  the position of the swaps in the sequence. In other words,  we believe that the ordering of the swaps in the explanation may have an impact on the persuasiveness of the explanation.  Various policies can be considered, for instance \emph{Strongest first}, where the strongest support (attribute) is presented first, in order to
get early on  at least a provisional agreement from the decision maker In this case, we should be able to determine which attribute is the most convincing for the DM, or \emph{Safer first} presenting first the attributes that refer directly to the preferential information given by the DM, ...
This has some connection with rethoric where an important aspect is to determine the order in which the arguments are to be presented to the audience, depending on their sign and strength \cite{maygold96}.

Technically, let us consider an explanation as a sequence of swaps, rather than a sequence of alternatives. 
As a preference swap specifies that an increase on a  given criterion is compensated by a decrease on another criterion (according to the preferences of the decision maker), regardless of the values of the other attributes,  it is quite possible that the same swap appears at different places in the graph in order to connect the vertices  $(x^1 \succsim  y^1), (x^2 \succsim y^2), \dots$. A remarkable property of the sequence of swaps is the possibility to commute between swaps related to different criteria : let the explanation $x \succsim y \succsim z$ with the swaps $s_1=(x \succsim y)$ and $s_2= (y \succsim z)$ that are not related to the same criteria, then we also have an explanation $x \succsim y \succsim z$ involving $s_2$ followed by $s_1$. This ability to change the order of  the swaps inside an explanation suggests some freedom to present  the  different arguments in an order that  will help to increase the persuasiveness of an explanation. Furthermore, when the swaps share a criterion, which is increased by one and decreased by another one (for instance, the first swap compensates an increase on $i$ by a decrease on $j$ and  the second swap compensates an increase on $j$ by a decrease on $k$), it might be possible to trade directly between criteria $i$ and $k$, without involving $j$. We apply this principle of reduction by transitivity, under the assumption of binary reference scales, in Section \ref{sec_binary}.

\end{itemize}


\section{Properties of explanations when preferences are expressed on binary reference scales}

\label{sec_binary}

This section aims at deepening the understanding of the
explanation engine introduced in section \ref{sec_explaining}, by focusing on
the case where preferences only refer to two distinct values of
the attributes on each scale. We will refer to this assumption as
\emph{binary reference scales}. Section \ref{BinaryDiscussion}
discusses the meaning and relevance of this assumption. Section
\ref{Arguments} introduces the partition of criteria into arguments supportive or not of a preference query, and leverages the assumption of binary reference scales to simplify indexes, covectors, and rounding of queries introduced in section
scales. Section \ref{Structure} reveals the core term-by-term
structure of any explanation, and resulting properties. 

\subsection{Binary reference scales}{\label{BinaryDiscussion}

\emph{Binary reference scales} are encountered when the preferences $\mathcal{P}$ expressed by the decision maker only reference two levels on each attribute : $\forall i \in N , V_i = \{\bot_i < \top_i\}$. Besides luck, such a tight reference set is the consequence of one of these two situations :
\begin{itemize}
\item \emph{attributes are themselves binary} : present or absent features, passed or failed checks, etc. Also, such binary attributes may result from any model relying on subset comparisons. While they fall outside the scope of this article, we believe the explanation engine discussed here can address problems not necessarily resulting from an additive utility decision model (for instance, robust weighted majority decision models rely on subset comparisons between coalition of criteria, as do pan-balance comparisons encountered in extensive measurement problems).
    \item 
    \emph{when expressing preference statements, the decision maker is deliberately restricted to comparing between prototypical alternatives specifically chosen in $\prod_{i\in N} \{\bot_i,\top_i\}$}. This process is supposed to help the decision maker focus on the main aspects of the preference problems, by limiting the number of moving parts between alternatives, and by referring to carefully chosen reference values, serving as anchors. This technique is used in the field of experimental design (yielding the one-factor-at-a-time or the factorial experiments methods), as well as in multicriteria decision aiding. For instance, the MACBETH method \cite{BeCV95,Costa2008} is based on binary alternatives : in order to assess hidden technical parameters (the weights of the various criteria), the decision maker is asked to express preference between prototypical alternatives, traditionally referencing a neutral level $\bot_i$ (for technological products, representing the attribute of a mid-range, available product), and a high level $\top_i$ (representing the attribute of a luxury product, or a hypothetical performance demanding a technological breakthrough). Note that, while MACBETH assumes commensurability between the neutral levels $(\bot_i)_{i\in N}$ and between the high levels $(\top_i)_{i\in N}$ (representing the satisfying reference level, in the sense of Simon \cite{sim56}, commensurability is neither needed nor assumed in this article. Not that in MACBETH, the two reference are called Low and High and are not necessarily assumed to be commensurate. 
    \end{itemize}


\subsection{Positive and negative arguments of a
statement}{\label{Arguments}} 

When preferences reference exactly
two values on each criterion, i.e. $V_i=\{\bot_i,\top_i\}$, the
index set boils down to the criteria set : $I = N \times \{1\} =
N$. There are one-to-one correspondences between the set $V\times
V$ of queries $q$ expressed on the reference scales, the set
$\{-1,0,1\}^N$ of covectors $q^\star$, and the partitions of $N$
between positive $q^+$, negative $q^-$ and neutral $q^0$
arguments (see Example \ref{ex_argu}). Let $(x\succsim_? y)\in V\times V$ :
\begin{itemize}
\item positive arguments : $$(x \succsim_? y)^+ = \{i\in N,
x_i=\top_i \textrm{ and } y_i=\bot_i\} = \{i\in N, (x \succsim_?
y)^\star = +1\}$$ \item negative arguments : $$(x \succsim_? y)^-
= \{i\in N, x_i=\bot_i \textrm{ and } y_i=\top_i\} = \{i\in N, (x
\succsim_? y)^\star = -1\}$$ \item neutral arguments : $$(x
\succsim_? y)^0 = \{i\in N, x_i=y_i\} = \{i\in N, (x \succsim_?
y)^\star = 0\}$$
\end{itemize}

\begin{example} (Ex \ref{ex1}. ctd.)
For the statement $(x \succsim y)$ introduced in Example \ref{ex1}, positive arguments $(x \succsim y)^+ = \{\crittrois,\critquatre\}$, negative arguments $(x \succsim y)^-=\{\critun, \critdeux\}$, and there is no neutral argument.
\label{ex_argu}
\end{example}

As a shortcut, when $N^+$ and $N^-$ are two disjoint subsets of
$N$, we denote $(N^+ \succsim N^-)$ the query for which positive and
negative arguments are respectively $N^+$ and $N^-$. Its
associated covector is $(N^+ \succsim N^-)^\star = \sum_{i\in
N^+}d^\star_i - \sum_{i\in N^-}d^\star_i$.

Dominance $\mathcal{D}$ is characterized by statements without
negative arguments $(N^+ \succsim \emptyset)$. Associated
covectors are in $\{0,1\}^N$ and can be broken down as a sum of
elementary dominance covectors $\sum_{i\in N^+}d^\star_i$.

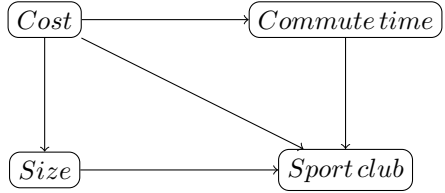
\begin{figure}[ht!]
\begin{center}
\begin{tikzpicture}
\tikzstyle{match}=[->, double]
\tikzstyle{normal}=[->]

\path (-2,2) node(crit4) [rectangle, rounded corners, draw] {$\critquatre$};
\path (2,2) node(crit1) [rectangle, rounded corners, draw] {$\critun$};
\path (-2,0) node(crit3) [rectangle, rounded corners, draw] {$\crittrois$};
\path (2,0) node(crit2) [rectangle, rounded corners, draw] {$\critdeux$};

\draw (crit4) [normal] -- (crit1); 
\draw (crit4) [normal] -- (crit3); 
\draw (crit3) [normal] -- (crit2); 
\draw (crit4) [normal] -- (crit2); 
\draw (crit1) [normal] -- (crit2); 

%
%
%
%
%
%

\end{tikzpicture}
\end{center}
\caption{Binary relation between criteria}
\label{fig_nec_reduc}
\end{figure}

A preference swap of order 2 has exactly one positive argument $i$
and one negative argument $j$ : $(\{i\} \succsim \{j\})$. It
expresses the preference of a raise of attribute $i$ from the
bottom to the top of the scale $V_i$, over a corresponding decrease
of attribute $j$, or the acceptance to compensate a nominal
decrease of attribute $j$ with a nominal increase of attribute
$i$. The set $\Delta_2$ of preference swaps of order 2 can be
viewed as a binary relation between criteria, and represented by a
graph with nodes in $N$ : $(i \succsim j)$. This idea is illustrated on the Figure \ref{fig_nec_reduc} and Example \ref{ex_binary_crit}

\begin{example} (Ex \ref{ex1}.  and Ex \ref{ex2}. ctd.)
The necessary preference relation deduced from the preferential information given in Example \ref{ex2} contains the following preference swap of order 2, represented on Figure \ref{fig_nec_reduc}.  

Thus, $\Delta_2=$$\{(\critun \succsim \critdeux),$ $(\crittrois \succsim \critdeux),$ $(\critquatre \succsim \critun)$ $, (\critquatre \succsim \critdeux) $$,(\critquatre \succsim \crittrois)\}$. For instance, the abbreviated statement $(\critquatre \succsim \critdeux)$, represented by the arrow from $\critquatre$ to $\critdeux$, means that an alternative ranking higher than $\D$ on attribute $\critquatre$ and low on attribute $\critdeux$ is necessarily preferred to one ranking low on
$\critquatre$ (between $\d$ and $\D$) and high on $\critdeux$, attributes $\critun$ and $\crittrois$ being equal : $(\star, \b, \star, \D) \succsim (\star, \B , \star, \d)$
\label{ex_binary_crit}
\end{example}

Generally, a statement $(N^+ \succsim N^-)$ is a preference swap
of order $|N^+|+|N^-|$.

When considering a query $(x
\succsim_? y)$ and a criterion $i$ for which preferences
$\mathcal{P}$ only reference two values $V_i = \{\bot_i <
\top_i\}$, the rounding rules presented in section \ref{sec_charac} boil down
to 5 mutually exclusive cases for attributes $x_i, y_i$ :
\begin{itemize} \item $i$ is a strong argument supporting $x$, as
$x_i \ge \top_i > \bot_i \ge y_i$. Then, $(x \succsim_? y)^\star_i
= +1$ ; \item $i$ is a weak argument supporting $x$, as $x_i >
y_i$ but $[\bot_i , \top_i] \nsubseteq [y_i , x_i]$. Then, $(x
\succsim_? y)^\star_i = 0$ ; \item $i$ is a truly neutral
argument, as $x_i = y_i$. Then, $(x \succsim_? y)^\star_i = 0$ ;
\item $i$ is a weak argument supporting $y$, as $\top_i \ge y_i >
x_i \ge \bot_i$. Then, $(x \succsim_? y)^\star_i = -1$ ; \item $i$
is a strong argument supporting $y$, as either $y_i$ is greater
than both $x_i$ and $\top_i$, or $x_i$ is smaller than both $y_i$
and $\bot_i$. Then, $(x \succsim_? y)\notin \mathcal{N}$ and the
covector $(x \succsim_? y)^\star$ is not defined.
\end{itemize}

\subsection{Term by term explanations}{\label{Structure}}
 The following theorem reveals the core structure every explanation is built upon.
\begin{theorem}[Term by term explanation]
\label{tbt}{\ }\newline Let a statement ${\sigma}\in (V\times V)
\cap \mathcal{N}$, and $\sigma^+,\sigma^-$ be the positive and negative arguments of $\sigma$ respectively. The following propositions are equivalent :
\begin{enumerate}[(i)]
\item ${\sigma} \in \mathcal{E}_2$ \item $\exists a \in
\ent^\star, \gamma_1, \cdots, \gamma_q \in \Delta_2, \ell_1,
\cdots, \ell_q \in \ent, m_1, \cdots , m_n \in \ent :$
$$ a{\sigma}^\star = \sum_k \ell_k \gamma^\star_k + \sum_k
m_k d^\star_k $$ \item There is a matching of cardinality
$|{\sigma}^-|$ in the graph of $\Delta_2 \cap ({\sigma}^+ \times
{\sigma}^-)$. \item There is an injection $\phi : {\sigma}^-
\rightarrow {\sigma}^+$ such that $ \forall k \in {\sigma}^-,
(\{\phi(k)\} \succsim \{k\})$.
\end{enumerate}

\end{theorem}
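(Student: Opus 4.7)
The plan is to establish the four statements equivalent by closing the cycle $(\mathrm{iv}) \Leftrightarrow (\mathrm{iii}) \Rightarrow (\mathrm{i}) \Rightarrow (\mathrm{ii}) \Rightarrow (\mathrm{iv})$. The equivalence $(\mathrm{iii}) \Leftrightarrow (\mathrm{iv})$ is immediate by definition: a matching of cardinality $|\sigma^-|$ in the graph $\Delta_2 \cap (\sigma^+ \times \sigma^-)$ must saturate the $\sigma^-$ side, which is exactly the datum of an injection $\phi : \sigma^- \to \sigma^+$ such that $(\{\phi(k)\} \succsim \{k\}) \in \Delta_2$ for every $k \in \sigma^-$.

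For $(\mathrm{iv}) \Rightarrow (\mathrm{i})$, I would build the explanation explicitly. Enumerating $\sigma^- = \{k_1, \dots, k_m\}$ and exploiting the injectivity of $\phi$ (which makes the pairs $\{k_l, \phi(k_l)\}$ pairwise disjoint inside $\sigma^- \sqcup \sigma^+$), let $x^j \in V$ be obtained from $x$ by setting $k_l$ to $\top$ and $\phi(k_l)$ to $\bot$ for every $l \leq j$. Each single step $x^{j-1} \succsim x^j$ is then exactly an instance of the elementary swap $(\{\phi(k_j)\} \succsim \{k_j\}) \in \Delta_2$, and the final comparison $x^m \succsim y$ is a dominance relation: $x^m$ and $y$ coincide on $\sigma^0 \cup \sigma^- \cup \phi(\sigma^-)$ while $x^m_i = \top_i$ and $y_i = \bot_i$ for every $i \in \sigma^+ \setminus \phi(\sigma^-)$. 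For $(\mathrm{i}) \Rightarrow (\mathrm{ii})$, it suffices to apply Chasles's relation (Theorem~\ref{Chasles}) along an explanation chain: summing the step covectors writes $\sigma^\star$ as a sum of $\Delta_2$-covectors and elementary dominance covectors, so (ii) holds with $a=1$ and non-negative integer multiplicities.

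The heart of the proof, and the main obstacle, is $(\mathrm{ii}) \Rightarrow (\mathrm{iv})$, which I would handle by verifying Hall's marriage condition on the bipartite graph $G_2 := \Delta_2 \cap (\sigma^+ \times \sigma^-)$. A crucial auxiliary lemma is that $\Delta_2$, viewed as a binary relation on $N$, is itself transitive: from $(\{\alpha\} \succsim \{\beta\})$ and $(\{\beta\} \succsim \{\gamma\})$ in $\Delta_2$ with $\alpha \neq \gamma$, chaining via transitivity of $\mathcal{N}$ on the triple of alternatives $(\top_\alpha, \bot_\beta, \bot_\gamma, \dots)$, $(\bot_\alpha, \top_\beta, \bot_\gamma, \dots)$, $(\bot_\alpha, \bot_\beta, \top_\gamma, \dots)$ yields $(\{\alpha\} \succsim \{\gamma\}) \in \Delta_2$.

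Armed with this lemma, fix $S \subseteq \sigma^-$ and let $W \subseteq N$ be the set of indices from which $S$ is reachable along a directed path of swaps $\gamma_k$ with $\ell_k > 0$ in the decomposition from (ii). By construction $W$ is closed under predecessors: if $\beta_k \in W$ and $\ell_k > 0$ then $\alpha_k \in W$. Summing the identity $a\sigma^\star_i = \sum_k \ell_k (\gamma_k^\star)_i + m_i$ over $i \in W$, swaps with both endpoints in $W$ contribute $0$, swaps with $\alpha_k \in W$ and $\beta_k \notin W$ contribute $+\ell_k \geq 0$, the case $\alpha_k \notin W, \beta_k \in W$ is excluded by closure, and $\sum_{i \in W} m_i \geq 0$. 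Therefore $a(|\sigma^+ \cap W| - |\sigma^- \cap W|) \geq 0$, giving $|\sigma^+ \cap W| \geq |\sigma^- \cap W| \geq |S|$. By the transitivity lemma, each $j \in \sigma^+ \cap W$ is a direct $G_2$-neighbor of the $S$-endpoint of its witnessing swap path, whence $|N_{G_2}(S)| \geq |\sigma^+ \cap W| \geq |S|$. Hall's theorem then delivers a matching of $G_2$ saturating $\sigma^-$, which supplies the injection $\phi$ required by (iv).
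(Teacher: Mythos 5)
Your proof is correct, and for the three easy implications it coincides with the paper's: $(iii)\Leftrightarrow(iv)$ is a rewording, $(iv)\Rightarrow(i)$ is the same explicit construction of a sequence of $|\sigma^-|$ swaps followed by one dominance step, and $(i)\Rightarrow(ii)$ is the Chasles relation with $a=1$. Where you genuinely diverge is on the hard implication: the paper proves $(ii)\Rightarrow(iii)$ by reading the covector identity as a feasible flow on an augmented network (a source feeding $\sigma^+$, a sink drained by $\sigma^-$, a dummy node absorbing the dominance terms $m_k$), decomposing that flow into cycles and source-to-sink paths, contracting each surviving path into a single edge of $\Delta_2\cap(\sigma^+\times\sigma^-)$ by transitivity of $\mathcal{N}$, and extracting a matching from the resulting flow of value $a|\sigma^-|$ via an integrality/capacity argument. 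You instead prove $(ii)\Rightarrow(iv)$ by verifying Hall's condition: for each $S\subseteq\sigma^-$ you sum the identity over the predecessor-closed reachability set $W$, which cancels the swaps internal to $W$ and leaves only nonnegative contributions, giving $|\sigma^+\cap W|\ge|\sigma^-\cap W|\ge|S|$; the transitivity of $\Delta_2$ as a relation on criteria (your auxiliary lemma, which the paper uses in the same role when contracting paths) then places $\sigma^+\cap W$ inside $N_{G_2}(S)$. The two arguments are dual (Hall versus max-flow/min-cut on a bipartite network), but yours dispenses with the explicit flow decomposition and the integrality step, at the modest price of isolating the transitivity lemma; the paper's version is somewhat more constructive, since its path decomposition indicates how the matching is assembled. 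Two points you should make explicit when writing this up: $S\subseteq W$ (paths of length zero are allowed), which is needed for $|\sigma^-\cap W|\ge|S|$; and when iterating the transitivity lemma along a witnessing path, first pass to a simple path so that all intermediate criteria are distinct, noting that the endpoints $j\in\sigma^+$ and $s\in S\subseteq\sigma^-$ are automatically distinct because $\sigma^+$ and $\sigma^-$ are disjoint.
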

In a nutshell, an explanation is a sequence where, at each step, a positive argument is used up to cancel an inferior negative argument, and, eventually, every negative argument has been cancelled.
We highlight three consequences of this theorem :
\begin{itemize}
\item \emph{If preferences only refer to swaps of order 2, then every necessary preference can be explained by swaps of order 2.}

The next result shows that $\mathcal{N}=\mathcal{E}_2$. 
This is a potent existence result for explanations, and it provides a complete description of the necessary preference relation under the assumption of the decision maker expressing preferences between alternatives differing along two criteria only.
Compared to theorem \ref{ilp}, we do not restrict ourself to preference statements in $V\times V$.

\begin{corollary}[Existence of an explanation in $\mathcal{E}_2$]
 In the case of binary reference scales and $\mathcal{P} \subseteq \Delta_2$ then  $\mathcal{N}= \mathcal{E}_2$; i.e:  for any statement $(x\succsim y) \in \mathcal{N}$, there exists an explanation in $\mathcal{E}_2$ of $(x\succsim y)$.
\label{Cor1}
\end{corollary}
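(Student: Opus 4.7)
The plan is to derive the nontrivial inclusion $\mathcal{N} \subseteq \mathcal{E}_2$ by composing three already-established results: Theorem \ref{rouding} (to reduce any query to one over the reference scales $V$), Theorem \ref{ilp} (to extract an integer conic decomposition of the rounded query), and Theorem \ref{tbt} (to turn that decomposition into a concrete sequence of order-2 swaps). The reverse inclusion $\mathcal{E}_2 \subseteq \mathcal{N}$ is immediate, since $\mathcal{N}$ is transitive and contains both $\mathcal{D}$ and $\Delta_2$.

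Given $(x \succsim y) \in \mathcal{N}$, I would first invoke Theorem \ref{rouding} to deduce that the query is bounded by $\mathcal{P}$ and that its rounding $(\underline{x} \succsim \overline{y}) = [x \succsim_? y]$, with $\underline{x}, \overline{y} \in V$, still belongs to $\mathcal{N}$. The rounding construction simultaneously yields $x \succsim \underline{x}$ and $\overline{y} \succsim y$ in $\mathcal{D}$ (on every criterion, $\underline{x}_i \le x_i$ and $\overline{y}_i \ge y_i$, with wildcard coordinates filled in by a common value in $[y_i,x_i]$). This reduction is the lever that lets me apply the ILP characterization, which is stated only for statements in $V \times V$.

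Next I would apply Theorem \ref{ilp} to $(\underline{x} \succsim \overline{y})$ to obtain $r \in \ent^{\star}$ and nonnegative integers $(\ell_{a,b})_{(a \succsim b) \in \mathcal{P}}$, $(m_i)_{i \in N}$ with
\begin{equation*}
r\,[\underline{x} \succsim_? \overline{y}]^{\star} = \sum_{(a \succsim b) \in \mathcal{P}} \ell_{a,b} \, (a \succsim b)^{\star} + \sum_{i \in N} m_i \, d^{\star}_i,
\end{equation*}
using that under binary reference scales the index set $I$ collapses to $N$. Since $\mathcal{P} \subseteq \Delta_2$ by hypothesis, every $(a \succsim b)$ on the right-hand side is itself a preference swap of order $2$, so the decomposition is precisely of the shape demanded by condition (ii) of Theorem \ref{tbt} applied to $\sigma = (\underline{x} \succsim \overline{y})$. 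The implication (ii)$\Rightarrow$(i) of that theorem then yields an explanation $(\underline{x} \succsim x^2 \succsim \cdots \succsim x^{q-1} \succsim \overline{y})$ whose steps all lie in $\mathcal{D} \cup \Delta_2$.

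To finish, I would splice the two dominance steps onto this explanation to obtain $x \succsim \underline{x} \succsim x^2 \succsim \cdots \succsim x^{q-1} \succsim \overline{y} \succsim y$, a valid witness of $(x \succsim y) \in \mathcal{E}_2$. The only delicate point is the middle step: one must check that the generic ILP decomposition of Theorem \ref{ilp} really specializes to the form required by Theorem \ref{tbt}(ii). This specialization uses both hypotheses of the corollary in an essential way — binary scales collapse the atoms $d^{\star}_{i,k}$ to $d^{\star}_i$, and $\mathcal{P} \subseteq \Delta_2$ forces every preferential-information covector to be an order-2 swap covector. If either assumption were dropped, higher-order covectors would appear on the right-hand side and Theorem \ref{tbt} would no longer apply verbatim.
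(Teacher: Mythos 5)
Your proposal is correct and follows essentially the same route as the paper: apply Theorem \ref{ilp} to the rounded query to obtain condition (ii) of Theorem \ref{tbt} (using $\mathcal{P}\subseteq\Delta_2$ so that the preferential covectors are order-2 swap covectors), invoke (ii)$\Rightarrow$(i) to get an explanation of $[\,x\succsim_? y\,]$, and splice on the dominance steps $x\succsim\underline{x}$ and $\overline{y}\succsim y$. Your write-up is in fact more explicit than the paper's terse proof about where the hypothesis $\mathcal{P}\subseteq\Delta_2$ is actually used.
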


\proof
Let $s=(x\succsim y) \in \mathcal{N}$.
By theorem \ref{ilp}, $\sigma := [x \succsim_? y]$ meets condition (ii).
Then condition (i) of theorem \ref{tbt} holds for $\sigma$. As $(x,\underline{x}) , (\overline{y},y) \in \mathcal{D}$, we also have $s \in \mathcal{E}_2$.
\endproof

\item \emph{Explanations can be kept short.} The next corollary proves that the size of the explanation is at most $n/2$, which appears manageable for the recipient of explanation.

\begin{corollary}[Length of the explanation]
 For any statement $(x\succsim y) \in \mathcal{N}$, there exists an explanation with a length at most $\lfloor \frac{n}{2} \rfloor+2$, 
where $\left\lfloor m \right\rfloor$ denotes the integer part of $m$.
\label{Cor2}
\end{corollary}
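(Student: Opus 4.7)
The plan is to produce an explanation of length $m+2$, where $m := |\sigma^-|$ counts the negative arguments of $\sigma := [x\succsim_? y]^\star$, by running $m$ preference swaps of order $2$ starting from $x$ itself and closing with a single dominance step onto $y$. Compared with the naive route that would go $x \succsim \underline{x} \succsim \cdots \succsim \overline{y} \succsim y$, this avoids the two padding dominance steps at the endpoints, which is what allows the bound $\lfloor n/2\rfloor+2$ rather than $\lfloor n/2\rfloor+3$.

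First I would invoke Theorem~\ref{tbt}(iv) on the rounded query $\sigma$ to obtain an injection $\phi : \sigma^- \to \sigma^+$ such that $(\phi(k) \succsim k) \in \Delta_2$ for every $k\in\sigma^-$. After fixing an arbitrary enumeration $\sigma^- = \{k_1,\dots,k_m\}$, I would define $x^0 := x$ and, for $l=1,\dots,m$, let $x^l$ coincide with $x^{l-1}$ on every coordinate except $x^l_{\phi(k_l)} := \bot_{\phi(k_l)}$ and $x^l_{k_l} := \top_{k_l}$. Each transition $(x^{l-1} \succsim x^l)$ modifies exactly two coordinates by construction; injectivity of $\phi$ ensures that $\phi(k_l)$ has not been touched before step $l$, so $x^{l-1}_{\phi(k_l)} = x_{\phi(k_l)} \geq \top_{\phi(k_l)}$ (because $\phi(k_l)\in\sigma^+$), and the combination of $k_l\in\sigma^-$ with boundedness and the binary structure of $V_{k_l}$ forces $\bot_{k_l} \leq x_{k_l} < \top_{k_l}$. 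The pair $(x_{\phi(k_l)}, x_{k_l})$ therefore dominates $(\top_{\phi(k_l)}, \bot_{k_l})$, which is $\succsim$-related to $(\bot_{\phi(k_l)}, \top_{k_l})$ via the elementary swap of $\Delta_2$; by transitivity the transition lies in $\mathcal{N}\setminus\mathcal{D}$ and qualifies as a preference swap of order~$2$.

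Next I would check that $(x^m \succsim y)$ is a dominance relation by scanning four types of coordinate: for $i\in\sigma^-$, $x^m_i = \top_i \geq y_i$ (boundedness forces $y_i \leq \top_i$); for $i\in\phi(\sigma^-)$, $x^m_i = \bot_i \geq y_i$ (because $i\in\sigma^+$ gives $y_i\leq\overline{y}_i = \bot_i$); for $i\in \sigma^+\setminus\phi(\sigma^-)$, $x^m_i = x_i \geq \top_i \geq y_i$; and for neutral coordinates, a short enumeration of the admissible configurations of $(\underline{x}_i,\overline{y}_i)$ under a binary bounded query shows $x_i\geq y_i$ in every case, a property inherited by $x^m_i = x_i$. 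The resulting sequence $(x, x^1, \dots, x^m, y)$ is then a valid explanation of length $m+2$, and the bound $m\leq\lfloor n/2\rfloor$ follows from $|\sigma^+|+|\sigma^-|\leq n$ (disjointness of $\sigma^+$ and $\sigma^-$ inside $N$) together with $|\sigma^-|\leq|\sigma^+|$ (existence of the injection $\phi$). The main obstacle will be the neutral-coordinate case of the final dominance check, which relies crucially on the binary structure of the reference scales; in a coarser setting one would need to splice in an additional rounding alternative $\overline{y}$ before $y$, inflating the bound by one.
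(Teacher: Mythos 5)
Your proposal is correct and follows essentially the same route as the paper: it extracts the injection $\phi$ from condition (iv) of Theorem~\ref{tbt}, performs one order-2 swap per negative argument of the rounded query, closes with a single dominance step onto $y$ (this is the sequence \refe{eqsequence}, up to your choice of reference values rather than $y$'s values on the swapped coordinates), and bounds $|\sigma^-|\leq\lfloor n/2\rfloor$ via disjointness of $\sigma^+$ and $\sigma^-$ together with the injectivity of $\phi$. The only point worth flagging is that invoking Theorem~\ref{tbt}(iv) presupposes $\sigma\in\mathcal{E}_2$, i.e.\ the standing hypothesis of Corollary~\ref{Cor1} ($\mathcal{P}\subseteq\Delta_2$), which the paper also leaves implicit.
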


\item \emph{Building an explanation, or ensuring there is none, is handled by an efficient  algorithm (see Algorithm \ref{alg1}). } A quick inspection of the complexity reveals that in the first part of the algorithm, there are at most $\mathcal{O}(n^2)$ calls to a linear program (with $n$ the number of criteria). 
This is followed by the resolution of a matching problem, which runs in its simpler version in $\mathcal{O}(n^3)$. 
Note that in theory, the number of constraints and variables of the LP may be exponential in $n$, because of the number of preference statements can be. In practice, this is of course highly unrealistic as it is too demanding for the DM. And for a polynomially bounded number of preference queries, the algorithm is efficient. 
\end{itemize}

\begin{center}
\begin{algorithm}[ht!]
\begin{footnotesize}
\caption{\sc{FindExplanation}} \label{alg1} 

\KwData{a statement $\sigma = (x \succsim y)$ to be explained, a set of
preference statements $\mathcal{P}$.}

\KwResult{a matching of each negative argument by a stronger
positive one.}

Compute $\sigma^+, \sigma^-$\\

\If{$|\sigma^+|<|\sigma^-|$}{return None}

\If{$\sigma \notin\mathcal{N}$}{return None}

Build the graph of $\Delta_2 \cap (\sigma^+ \times \sigma^-)$ :\\

 Initialize $\mathcal{G}$ as a graph with nodes $\sigma^+ \cup \sigma^-$ and no
 edge.\\

\For{$i \in \sigma^+$ }{

\For{$j\in \sigma^-$}{

\If{the LP with $|N|+|\mathcal{P}|$ inequality constraints,
$|N|$ equality constraints and $|N|+|\mathcal{P}|$ variables \\
$~~~\forall p\in\mathcal{P}, \ell_p \ge 0$\\
                $~~~\forall k\in N, m_k \ge 0$ \\
                $~~~\forall k\in N, \sum_{p\in\mathcal{P}}\ell_p \ p^\star_k + m_k = 1$ if $k=i$, -1 if $k=j$, 0
                else.\\
           is feasible}{add edge $(i,j)$					 to $\mathcal{G}$}

}
 }

Find a matching $\phi$ of maximum cardinality $C$ in bipartite
graph $\mathcal{G}$. \\

\If{$C < |\sigma^-|$}{return None}

{return $\phi$}

\end{footnotesize}

\end{algorithm}

\end{center}

\begin{figure}[ht!]
\begin{center}
\begin{tikzpicture}
\tikzstyle{match}=[->, double]
\tikzstyle{normal}=[->]

%
%
%

\path (6,2.5) node(arg) [] {\emph{positive arguments}};
\path (10,2.5) node(arg) [] {\emph{negative arguments}};

\path (6,2) node(crit4) [rectangle, rounded corners, draw] {$\critquatre$};
\path (10,2) node(crit1) [rectangle, rounded corners, draw] {$\critun$};
\path (6,0) node(crit3) [rectangle, rounded corners, draw] {$\crittrois$};
\path (10,0) node(crit2) [rectangle, rounded corners, draw] {$\critdeux$};

\draw (crit4) [match] -- (crit1); 
\draw (crit3) [match] -- (crit2); 
\draw (crit4) [normal] -- (crit2); 

\draw (8,0) [dotted] -- (8,2);

\end{tikzpicture}
\end{center}
\caption{Matching returned by Algorithm \ref{alg1} with data of Example \ref{ex2} }
\label{fig_matching}
\end{figure}
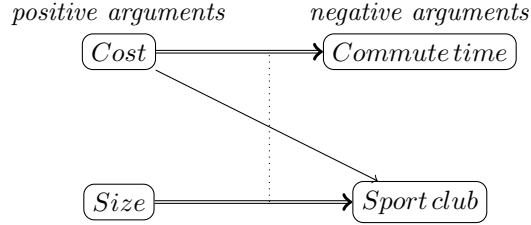


If the input statement $(x \succsim y)$ is in $\mathcal{E}_2$, the algorithm returns a mapping $\phi$ matching each negative argument by a stronger positive argument. $\phi$ is not an explanation \emph{per se}. In order to provide a suitable sequence, the elementary links - swaps $( \phi(k) \succsim k)_{k \in (x \succsim y)^-}$, and maybe some dominance relations - need to be sorted. Depending on the context of the decision, some policy concerning this sorting could prove more efficient than others, and this area of freedom left by the explanation engine should be investigated, theoretically and/or empirically, before any actual implementation.

There is also some leeway concerning the values of the attributes appearing in the explanation sequence. Consider the following policies :
\begin{itemize}
\item generate the shortest possible explanation, by grouping every dominance relations into one single step, and providing a sequence where every term is described by attributes similar to either $x$ or $y$.
This is depicted in the sequence \refe{eqsequence};

\begin{equation}
\left\{ \begin{array}{l}
  x^1 = x \\
  x^2 = (y_{\{\phi(j_1),j_1\}},x_{-\{\phi(j_1),j_1\}}) \\
  x^3 = (y_{\{\phi(j_1),j_1,\phi(j_2),j_2\}},x_{-\{\phi(j_1),j_1,\phi(j_2),j_2\}}) \\
  \cdots \\
  x^{p+1} = (y_{\{\phi(j_1),j_1,\ldots,\phi(j_p),j_p\}},x_{-\{\phi(j_1),j_1,\ldots,\phi(j_p),j_p\}}) \\
  x^{p+2} = y
	\end{array} \right. 
\label{eqsequence}
\end{equation}
\item generate an explanation one step longer, by having both an initial and final dominance steps, and providing a sequence where every term is described by attributes on the reference scales derived from $\mathcal{P}$, yielding the following sequence:
\begin{equation}
\left\{ \begin{array}{l}
  x^1 = x \\
	x^2 = \underline{x} \\
  x^2 = (\overline{y}_{\{\phi(j_1),j_1\}},\underline{x}_{-\{\phi(j_1),j_1\}}) \\
  x^3 = (\overline{y}_{\{\phi(j_1),j_1,\phi(j_2),j_2\}},\underline{x}_{-\{\phi(j_1),j_1,\phi(j_2),j_2\}}) \\
  \cdots \\
 x^{p+1} = (\overline{y}_{\{\phi(j_1),j_1,\ldots,\phi(j_p),j_p\}},\underline{x}_{-\{\phi(j_1),j_1,\ldots,\phi(j_p),j_p\}}) \\
  x^{p+2} = y
	\end{array} \right. 
\label{eqsequence2}
\end{equation}

\end{itemize}
The first policy seems a better fit for one-shot explanations, as it provides a shorter explanation looking a lot like the input statement. The second policy would be preferred for batch explanations, as all the explanations provided would share a common foundation of prototypical alternatives.

\begin{example}{(Ex\ref{ex1} and Ex \ref{ex2}. ctd.)}

Figure \ref{fig_matching} shows the bipartite graph of the relation $\Delta_2$ restricted to pairs of positive-negative arguments. The double arrows highlight a matching 
of cardinality 2, covering the negative arguments, as returned by Algorithm \ref{alg1} : $\{(\critquatre \succsim \critun), (\crittrois \succsim \critdeux)\}$. Therefore, the statement $(x \succsim y)$ can be explained by a sequence of preference swaps of order 2 and dominance relations. We propose 4 explanations, with $x$=(-45 min, no gym, 450 $m^2$, -5000 \euro) and $y$=(-15 min, gym, 180 $m^2$, -12500 \euro) :

\begin{itemize}
\item $x\ \Delta_2$ (-15 min, no gym, 450 $m^2$, -12500 \euro) $\Delta_2 \ y$
\item 
$x \ \Delta_2$ (-45 min, gym, 180 $m^2$, -5000 \euro) $ \Delta_2 \ y$
\item 
$x \ \mathcal{D} \  (\a,\b,\C,\D) \ \Delta_2 \ (\A,\b,\C,\d) \ \Delta_2 \  (\A,\B,\c,\d) \ \mathcal{D} \ y$
\item 
$x \ \mathcal{D} \  (\a,\b,\C,\D) \ \Delta_2 \ (\a,\B,\c,\D) \ \Delta_2 \  (\A,\B,\c,\d) \ \mathcal{D} \ y$
\end{itemize}
\end{example}


\begin{example}{(Ex\ref{ex1} and Ex \ref{ex2}. ctd.)}

From Figure \ref{fig_necessary}, one can readily see that $(\A, \B, \C, \d) \succsim (\a, \b, \c, \D) \in \mathcal{N}$.
Let us now try to find an explanation in $\mathcal{E}_2$.
The positive and negative arguments are 
$\sigma^+ = \{ \critquatre \}$ and $\sigma^-=\{ \critun, \critdeux, \crittrois\}$. Considering $\Delta_2$ as described in Example  \ref{ex_binary_crit}, we can see that 
condition (iii) of theorem \ref{tbt} is not satisfied, and thus $(\A, \B, \C, \d) \succsim (\a, \b, \c, \D) \in \mathcal{N} \setminus \mathcal{E}_2$.

\label{ExNotInE2}
\end{example}


\section{Conclusion}

\label{sec_conclu}

Generating explanations to justify recommendation is a key challenge to decision-aiding systems. 
While we witness the emergence of highly sophisticated methods to elicit preferences and compute recommended alternatives, the question of explanation is often neglected. We believe this may hinder the development of such systems. As a matter of fact, real decision makers prefer the use of a very basic model if its outcomes are transparent, rather that elaborate models that look as black box for them. They need explanation to accept the decision support recommendation. 

In this work we address this question in what is arguably the most basic model used in MCDM: a simple additive model, which is given here a robust interpretation when preferences are incomplete (as is usually the case in practice). Due to the presence of such imprecise utility functions, we propose an original approach where explanations are conceived as sequences of simple steps (simple comparisons of options). To be of practical interest, such sequences must exist, be of reasonable length, and each step should be easily checked by the DM. 
We have thus focused on these fundamental features in this paper.  
What we show in that respect is that the size of explanation depends on the number of values in the attribute space that are used in the preferential information. In particular, when there is no restriction on this number, the size of the explanations cannot be bounded. On the other hand, when the preferential information relies on two values per attribute only, we obtained a sharp bound on the size of a provably existing explanation. 

One question is whether our approach (in short: explanations conceived as sequences of simple steps) can be exported to other settings. While some general notions may be reused with different  preference models, we emphasize that some key properties of the additive model (in particular the ability to work on pairs of criteria, thanks to separability) are exploited in this work. 

Still, there are many possible extensions to this work. We cite some of the most prominent here. 

Firstly, there remain \emph{theoretical questions} to be studied. We have investigated two extreme cases:  in the first one, no assumption is made on preferential information (yielding a negative result in terms of the length of the explanation), while in the second one we assume a binary reference scale (and can guarantee the existence of a short explanation). 
A natural but challenging question is whether the complexity of the reference scale can be more generally linked to the size of the explanations. 

Secondly, we have provided an algorithm for the binary case only. It would be of practical interest to design and implement an algorithm finding the simplest (e.g. shortest) explanation in the general case.

Thirdly, while we discuss good theoretical properties of explanations, an \emph{empirical validation} remains to be conducted on other aspects mentioned (the sequencing of swaps, the choice of values, for instance). What makes the exercise difficult though is that this may highly depend on the context of use: a DM who needs to justify an important decision before a committee may not have the same expectations as a DM taking a decision for herself. 

Finally, the framework may be smoothly extended to cater for more general situations. For instance, the nature of the preferential information may be different. The DM may use a more expressive language, and give some statements on the intensity of their preferences. 
A first step in that direction is to assume a quaternary relation, of the form ``$o_1$ is \emph{more intensely} preferred to $o_2$ than $o_3$ is preferred to $o_4$''. 
While this would constitute a first step towards dealing with intensities, we are confident that this may still be handled within the framework described here.  
As a final suggestion on a possible extension of this framework, we note that this work makes the assumption that elicitation and explanation are dealt with separately.  
A certainly promising perspective is to extend the framework so that explanation and elicitation are actually intertwined. By putting forward an explanation, the system shows some evidence which can in turn trigger some reaction from the DM.


\bibliographystyle{spphys}       


%
%
%
%


\newpage

\appendix

\section{Proofs}
\label{Proofs}

\begin{proof}[Theorem \ref{Plength1}]

Let $n= 3$, $p\in \ent^*$.
Assume that  $X_1 \supseteq \{0,1,2,\ldots,2p\}$,  $X_2 \supseteq \{-p,-p+1,\ldots,-1,0\}$ and $X_3 \supseteq \{-p,-p+1,\ldots,-1,0\}$.
Consider the following preference information $\mathcal{P}$:

\begin{align}
 & \forall j\in\{0,\ldots,p-1\}  \nonumber \\
 & \quad \qquad (2j,-j,\cdot ) \succsim (2j+1,-j-1,\cdot )   \label{E21} \\
 & \forall j\in\{0,\ldots,p-1\}   \nonumber \\
 &  \quad \qquad (2j+1,\cdot,-j ) \succsim (2j+2,\cdot,-j-1 )   \label{E22} 
\end{align}

Hence $V_1 = \{0,1,2,\ldots,2p\}$,  $V_2 = \{-p,-p+1,\ldots,-1,0\}$ and $V_3 = \{-p,-p+1,\ldots,-1,0\}$.

We set $x=(0,0,0)$ and $y=(2p ,-p,-p)$.
With this $ \mathcal{P}$, we clearly obtain the sequence
\begin{align*}
  x = & (0,0,0) \succsim (1,-1,0 )  \qquad \mbox{(by \refe{E21})}\\
 & \succsim
 (2,-1,-1 ) \succsim \cdots  \qquad \mbox{(by \refe{E22})} \\
 & \succsim (2p-2,-(p-1),-(p-1) ) \\
 & \succsim (2p-1,-p,-(p-1))  \qquad \mbox{(by \refe{E21})} \\
 & \succsim (2p ,-p,-p) = y \qquad \mbox{(by \refe{E22})} \\
\end{align*}
so that $x \succsim y$.
This sequence is of length $2\,p$.

\bigskip

There remains to prove that this is the shortest explanation.

To this end, we first need to determine the form of $\Delta_2$. 
According to Theorem \ref{rouding}, we need only to consider the elements in$\Delta_2$ that belong to $V_1\times V_2 \times V_3$
(The other ones can be deduced by Pareto dominance).
The preference information \refe{E21} and \refe{E22} is very special.
In particular, any value $k \in V_1$ appears only in two examples -- one in which $k$ appears in the left hand side (in \refe{E21}) and the other one where $k$ appears in the right hand side (in (\ref{E22})).
Moreover, we notice that, in \refe{E21} and \refe{E22}, the value on the first attribute is always increasing from the left hand side to the right hand side,
and the value of the second and the third attributes is decreasing from the left hand side to the right hand side.
Hence the elements of $\Delta_2$ cannot be obtained by a combination of two or more preference information.
They are obtained only from one preference information (\refe{E21} and \refe{E22}) and Pareto dominance $\mathcal{D}$.
More precisely, $\Delta_2$ is composed of the following preferences
\begin{align*}
 & (i,j,k) \succsim (i',j',k')
\end{align*}
where either there exists $l$ such that $i=2l$, $j=2l+1$, $j \geq -l > -l-1 \geq j'$ and $k=k'$,
or there exists $l$ such that $i=2l+1$, $j=2l+2$, $j=j'$ and $k \geq -l > -l-1 \geq k'$.
From this, one can readily see that the explanation of the preference of $x$ over $y$ described earlier is the shortest one.

\end{proof}


\begin{proof}[ Theorem \ref{tbt} : term-by-term explanations]

We prove $(i) \Rightarrow (ii) \Rightarrow (iii) \Rightarrow (iv)
\Rightarrow (i)$ :

\begin{itemize}
\item $(i) \Rightarrow (ii)$ : Suppose there is a sequence $(x^1
\succsim x^2),\ldots,(x^{r-1} \succsim x^r)$ of dominance
relations and/or preference swaps of order 2 such that $x^1=x$ and
$x^r=y$. Without loss of generality, the terms of this sequence
can be taken on the reference scales : $x^k\in V$ (see remark \ref{rem1} in section \ref{sec_explaining}, with $k=2$). Chasles relation for covectors (th
\ref{Chasles}) applied to this sequence yields $\sum_k (x^k
\succsim x^{k+1})^\star = (x^1 \succsim x^r)^\star$, which in turn
yields the relation sought with $a=1$ after sorting the $(x^k
\succsim x^{k+1})$ between preference swaps and dominance
relations. Note that this covector transformation is a destructive
compression, as it does not preserve the sequential structure of
an explanation : there is no hope of restoring the original order
of the swaps $\gamma_k$. \item $(ii) \Rightarrow (iii)$ : Suppose
there exists integer coefficients $a, \ell_1, \cdots, \ell_q$, $m_1,
\cdots , m_n$ and preference swaps of order 2 : $\gamma_1,
\cdots, \gamma_q$ such that
\begin{equation} a{\sigma}^\star = \sum_k \ell_k
\gamma^\star_k + \sum_k m_k d^\star_k
\label{SumOfCovectors}\end{equation} Multiplying both sides of
covector equation (\ref{SumOfCovectors}) by the vector $(1,
\cdots, 1)$, we obtain the relation :
$$M := a(|{\sigma}^+| - |{\sigma}^-|)=\sum m_k \geq 0$$

To homogenize the right-hand side, we introduce a dummy criterion
0 standing for dominance : $d_k = ( {k} \succsim {0})$ and denote
$\widehat{N}=N\cup\{0\}$. Thus, relation $\mathcal{D}\cup\Delta_2$
is a graph with nodes in $\widehat{N}$. Re-indexing coefficients
$\ell_k$ by the positive and negative arguments of swap $\gamma_k$
(summing up duplicates if needed), and introducing $\ell_{k,0} :=
m_k$ :
\begin{equation}a \  {\sigma}^\star = \sum_{(i\  \succsim \ j)\in
\mathcal{D} \cap \Delta_2}\ell_{i,j} (i \succsim j)^\star
\label{eqfipl1}\end{equation}

In order to complete the flow $\ell$, we introduce :
\begin{itemize}
\item a source $s$ supplying flow $\ell_{s,i}=a$ to the positive
arguments $i\in{\sigma}^+$; \item a sink $t$ collecting flow
$\ell_{j,t}=a$ from the negative arguments $j\in{\sigma}^-$, and
$\ell_{0,t} = M$ from node 0.
\end{itemize}
Covector equation (\ref{eqfipl1}) ensures $\ell$ defines a
feasible flow on the graph
$(\widehat{N}\cup\{s,t\},\mathcal{D}\cup\Delta_2\cup \{s\}\times
{\sigma}^+ \cup {\sigma}^- \times\{t\}\cup\{(0,t)\})$, without
capacity constraints, as projection on the $i^{th}$ coordinate
ensures flow conservation for node $i\in N$. Flow $\ell$ can be
decomposed as a superposition of :
\begin{itemize}
\item cycles, involving necessary equivalence between the nodes,
and not contributing to the value of the flow; \item paths from
the source $s$ to the sink $t$ passing through node 0, denoting a
dominance relation. Their total contribution to the value of the
flow is $M$;
 \item paths from the
source $s$ to the sink $t$ not passing through node 0, with an
overall contribution of $a \times |{\sigma}^-|$ to the value of the flow.
Each of these paths links a positive argument $i_1\in {\sigma}^+$
to a negative argument $i_r \in {\sigma}^-$ through necessary
preference swaps of order 2. Transitivity of the necessary
preference relation entails that $i_1$ is necessarily preferred to
$i_r$ : the edge $(i_1,i_r)$ belongs to $\Delta_2 \cap ({\sigma}^+
\times {\sigma}^-)$.
\end{itemize}
We reduce flow $\ell$ by ignoring the cycles and paths passing
through node 0. Also, the flow $a$ carried by the path from source
to sink $ s \rightarrow i_1 \succsim i_2 \succsim \ldots \succsim
i_r \rightarrow t$ is redirected to edge $(i_1, i_r)$. As a
result, we obtain a flow of value $a|{\sigma}^-|$ on the graph of
the relation $\Delta_2$ restricted to ${\sigma}^+ \times
{\sigma}^-$. This entails the existence of a matching of
cardinality $|{\sigma}^-|$ in this graph, obtained by setting an
upper capacity constraint of value 1 on each edge leaving the
source $s$ and entering the sink $t$ (as a cut of capacity $C$ on
the network with capacity constraints $c_{i,j}\in\{1,\infty\}$ is
a cut of capacity $a\times C$ on the same network with capacity
constraints $a \times c_{i,j}$).

 \item
$(iii) \Rightarrow (iv)$ is simply a rewording.

\item $(iv) \Rightarrow (i)$ : Let $\phi : {\sigma}^- \rightarrow
{\sigma}^+$, injective, such that $ \forall k \in {\sigma}^-,
(\{\phi(k)\} \succsim \{k\})$. In subset form, the statement
$\sigma$ expressing the necessary preference of alternative $x \in
V$ over alternative $y\in V$ can be written $(\sigma^+ \succsim
\sigma^-)$. Given any ordering $\pi$ of the negative argument set
$\sigma^-$, we can build a sequence of alternatives of decreasing
preference $x_1 := x \succsim \ldots \succsim x_{|\sigma^-|+1}\in
V$ such that the $k^{th}$ statement $(x_k \succsim x_{k+1})$
matches the preference swap $(\phi(\pi_k) \succsim \pi_k)$. Thus,
the sequence of sets $(x_k \succsim y)^-$ decreases from
$\sigma^-$ to $\emptyset$, one element at a time, and the sequence
of sets $(x_k \succsim y)^+$ also decreases from $\sigma^+$ to
$\sigma^+ \setminus \phi[\sigma^-]$, one element at a time. If the
set $\sigma^+ \setminus \phi[\sigma^-]$ is empty,
$x_{|\sigma^-|+1} = y$, and the sequence $x = x^1 \succsim \ldots
\succsim x^{|\sigma^-|+1}=y$ is an explanation of $(x \succsim y)$
by preference swaps of order 2, of length $|\sigma^-|+1$. Else,
$x^{|\sigma^-|+1} \neq y$ but $(x^{|\sigma^-|+1} \succsim_? y)$ is
a dominance statement, as its negative argument set is empty.
Thus, the sequence $x = x^1 \succsim \ldots \succsim
x^{|\sigma^-|+1} \succsim y$ is an explanation of $(x \succsim y)$
by preference swaps of order 2 and a dominance relation, of length
$|\sigma^-|+2$.
\end{itemize}

\end{proof}

\end{document}